%% file: bcbc.tex
\PassOptionsToPackage{unicode}{hyperref}
\PassOptionsToPackage{hyphens}{url}
\PassOptionsToPackage{dvipsnames,svgnames,x11names}{xcolor}
\documentclass[
  12pt]{article}

\usepackage{amsmath,amssymb}
\usepackage{iftex}

\ifPDFTeX
  \usepackage[T1]{fontenc}
  \usepackage[utf8]{inputenc}
  \usepackage{textcomp} 
\else 
  \usepackage{unicode-math}
  \defaultfontfeatures{Scale=MatchLowercase}
  \defaultfontfeatures[\rmfamily]{Ligatures=TeX,Scale=1}
\fi
\usepackage{lmodern}
\ifPDFTeX\else  
\fi
\IfFileExists{upquote.sty}{\usepackage{upquote}}{}
\IfFileExists{microtype.sty}{
  \usepackage[]{microtype}
  \UseMicrotypeSet[protrusion]{basicmath} 
}{}
\makeatletter
\@ifundefined{KOMAClassName}{
  \IfFileExists{parskip.sty}{%
    \usepackage{parskip}
  }{
    \setlength{\parindent}{0pt}
    \setlength{\parskip}{6pt plus 2pt minus 1pt}}
}{
  \KOMAoptions{parskip=half}}
\makeatother
\usepackage{xcolor}
\makeatletter
\ifx\paragraph\undefined\else
  \let\oldparagraph\paragraph
  \renewcommand{\paragraph}{
    \@ifstar
      \xxxParagraphStar
      \xxxParagraphNoStar
  }
  \newcommand{\xxxParagraphStar}[1]{\oldparagraph*{#1}\mbox{}}
  \newcommand{\xxxParagraphNoStar}[1]{\oldparagraph{#1}\mbox{}}
\fi
\ifx\subparagraph\undefined\else
  \let\oldsubparagraph\subparagraph
  \renewcommand{\subparagraph}{
    \@ifstar
      \xxxSubParagraphStar
      \xxxSubParagraphNoStar
  }
  \newcommand{\xxxSubParagraphStar}[1]{\oldsubparagraph*{#1}\mbox{}}
  \newcommand{\xxxSubParagraphNoStar}[1]{\oldsubparagraph{#1}\mbox{}}
\fi
\makeatother

\providecommand{\tightlist}{%
  \setlength{\itemsep}{0pt}\setlength{\parskip}{0pt}}\usepackage{longtable,booktabs,array}
\usepackage{calc} 
\usepackage{etoolbox}
\makeatletter
\patchcmd\longtable{\par}{\if@noskipsec\mbox{}\fi\par}{}{}
\makeatother
\IfFileExists{footnotehyper.sty}{\usepackage{footnotehyper}}{\usepackage{footnote}}
\makesavenoteenv{longtable}
\usepackage{graphicx}
\makeatletter
\def\maxwidth{\ifdim\Gin@nat@width>\linewidth\linewidth\else\Gin@nat@width\fi}
\def\maxheight{\ifdim\Gin@nat@height>\textheight\textheight\else\Gin@nat@height\fi}
\makeatother
\setkeys{Gin}{width=\maxwidth,height=\maxheight,keepaspectratio}
\makeatletter
\def\fps@figure{htbp}
\makeatother

\makeatletter
\@ifpackageloaded{caption}{}{\usepackage{caption}}
\AtBeginDocument{%
\ifdefined\contentsname
  \renewcommand*\contentsname{Table of contents}
\else
  \newcommand\contentsname{Table of contents}
\fi
\ifdefined\listfigurename
  \renewcommand*\listfigurename{List of Figures}
\else
  \newcommand\listfigurename{List of Figures}
\fi
\ifdefined\listtablename
  \renewcommand*\listtablename{List of Tables}
\else
  \newcommand\listtablename{List of Tables}
\fi
\ifdefined\figurename
  \renewcommand*\figurename{Figure}
\else
  \newcommand\figurename{Figure}
\fi
\ifdefined\tablename
  \renewcommand*\tablename{Table}
\else
  \newcommand\tablename{Table}
\fi
}
\@ifpackageloaded{float}{}{\usepackage{float}}
\floatstyle{ruled}
\@ifundefined{c@chapter}{\newfloat{codelisting}{h}{lop}}{\newfloat{codelisting}{h}{lop}[chapter]}
\floatname{codelisting}{Listing}

\makeatother
\makeatletter
\@ifpackageloaded{caption}{}{\usepackage{caption}}
\@ifpackageloaded{subcaption}{}{\usepackage{subcaption}}
\makeatother

\ifLuaTeX
  \usepackage{selnolig}  
\fi
\usepackage[]{natbib}
\usepackage{bookmark}

\IfFileExists{xurl.sty}{\usepackage{xurl}}{} 
\hypersetup{
  pdftitle={Title},
  pdfauthor={Author 1; Author 2},
  pdfkeywords={3 to 6 keywords, that do not appear in the title},
  colorlinks=true,
  linkcolor={blue},
  filecolor={Maroon},
  citecolor={Blue},
  urlcolor={Blue},
  pdfcreator={LaTeX via pandoc}}

\newcommand{\anon}{1}

\usepackage{delimset} 
\usepackage{bbm} 
\usepackage{algorithm} 
\usepackage{algpseudocode}
\usepackage{amsthm} 
\usepackage{enumitem} 

\DeclareMathOperator*{\argmin}{arg\,min}

\newtheorem{theorem}{Theorem}

\newtheorem{lemma}{Lemma}
\newtheorem{definition}{Definition}
\newtheorem{proposition}{Proposition}

\begin{document}

\def\spacingset#1{\renewcommand{\baselinestretch}%
{#1}\small\normalsize} \spacingset{1}


\if1\anon
{
  \title{\bf Biconvex Biclustering}
  \author{Sam Rosen\hspace{.2cm}\\
    Department of Statistical Science, Duke University\\
    Eric C. Chi \\
    School of Statistics, University of Minnesota\\
    Jason Xu \\
    Department of Biostatistics, University of California Los Angeles }
  \maketitle
} \fi

\if0\anon
{
  \bigskip
  \bigskip
  \bigskip
  \begin{center}
    {\LARGE\bf Biconvex Biclustering}
\end{center}
  \medskip
} \fi

\bigskip
\begin{abstract}
This article proposes a biconvex modification to convex biclustering in order to improve its performance in high-dimensional settings. In contrast to heuristics that discard a subset of noisy features \textit{a priori}, our method jointly learns and accordingly weighs informative features while discovering biclusters. Moreover, the method is adaptive to the data, and is accompanied by an efficient algorithm based on proximal alternating minimization, complete with detailed guidance on hyperparameter tuning and efficient solutions to optimization subproblems. These contributions are theoretically grounded; we establish finite-sample bounds on the objective function under sub-Gaussian errors, and generalize these guarantees to cases where input affinities need not be uniform. Extensive simulation results reveal our method consistently recovers underlying biclusters while weighing and selecting features appropriately, outperforming peer methods. An application to a gene microarray dataset of lymphoma samples recovers biclusters matching an underlying classification, while giving additional interpretation to the mRNA samples via the column groupings and fitted weights.

\end{abstract}

\noindent%
{\it Keywords:} clustering, optimization, feature weighing and selection, proximal algorithms, algebraic connectivity
\vfill

\newpage
\spacingset{1.1}

\newcommand{\ba}{\boldsymbol{a}}
\newcommand{\bb}{\boldsymbol{b}}
\newcommand{\bc}{\boldsymbol{c}}
\newcommand{\bd}{\boldsymbol{d}}
\newcommand{\be}{\boldsymbol{e}}
\newcommand{\bff}{\boldsymbol{f}}
\newcommand{\bg}{\boldsymbol{g}}
\newcommand{\bh}{\boldsymbol{h}}
\newcommand{\bi}{\boldsymbol{i}}
\newcommand{\bj}{\boldsymbol{j}}
\newcommand{\bk}{\boldsymbol{k}}
\newcommand{\bl}{\boldsymbol{l}}
\newcommand{\bn}{\boldsymbol{n}}
\newcommand{\bo}{\boldsymbol{o}}
\newcommand{\bp}{\boldsymbol{p}}
\newcommand{\bq}{\boldsymbol{q}}
\newcommand{\br}{\boldsymbol{r}}
\newcommand{\bs}{\boldsymbol{s}}
\newcommand{\bt}{\boldsymbol{t}}
\newcommand{\bu}{\boldsymbol{u}}
\newcommand{\bv}{\boldsymbol{v}}
\newcommand{\bw}{\boldsymbol{w}}
\newcommand{\bx}{\boldsymbol{x}}
\newcommand{\by}{\boldsymbol{y}}
\newcommand{\bz}{\boldsymbol{z}}
\newcommand{\bmu}{\boldsymbol{\mu}}
\newcommand{\bA}{\boldsymbol{A}}
\newcommand{\bB}{\boldsymbol{B}}
\newcommand{\bC}{\boldsymbol{C}}
\newcommand{\bD}{\boldsymbol{D}}
\newcommand{\bE}{\boldsymbol{E}}
\newcommand{\bF}{\boldsymbol{F}}
\newcommand{\bG}{\boldsymbol{G}}
\newcommand{\bH}{\boldsymbol{H}}
\newcommand{\bI}{\boldsymbol{I}}
\newcommand{\bJ}{\boldsymbol{J}}
\newcommand{\bK}{\boldsymbol{K}}
\newcommand{\bL}{\boldsymbol{L}}
\newcommand{\bM}{\boldsymbol{M}}
\newcommand{\bN}{\boldsymbol{N}}
\newcommand{\bO}{\boldsymbol{O}}
\newcommand{\bP}{\boldsymbol{P}}
\newcommand{\bQ}{\boldsymbol{Q}}
\newcommand{\bR}{\boldsymbol{R}}
\newcommand{\bS}{\boldsymbol{S}}
\newcommand{\bT}{\boldsymbol{T}}
\newcommand{\bU}{\boldsymbol{U}}
\newcommand{\bV}{\boldsymbol{V}}
\newcommand{\bW}{\boldsymbol{W}}
\newcommand{\bX}{\boldsymbol{X}}
\newcommand{\bY}{\boldsymbol{Y}}
\newcommand{\bZ}{\boldsymbol{Z}}
\newcommand{\balpha}{\boldsymbol{\alpha}}
\newcommand{\bbeta}{\boldsymbol{\beta}}
\newcommand{\bgamma}{\boldsymbol{\gamma}}
\newcommand{\bdelta}{\boldsymbol{\delta}}
\newcommand{\bepsilon}{\boldsymbol{\epsilon}}
\newcommand{\blambda}{\boldsymbol{\lambda}}
\newcommand{\bnu}{\boldsymbol{\nu}}
\newcommand{\bphi}{\boldsymbol{\phi}}
\newcommand{\bpi}{\boldsymbol{\pi}}
\newcommand{\bsigma}{\boldsymbol{\sigma}}
\newcommand{\btheta}{\boldsymbol{\theta}}
\newcommand{\bomega}{\boldsymbol{\omega}}
\newcommand{\bxi}{\boldsymbol{\xi}}
\newcommand{\bGamma}{\boldsymbol{\rho}}
\newcommand{\bDelta}{\boldsymbol{\Delta}}
\newcommand{\bTheta}{\boldsymbol{\Theta}}
\newcommand{\bLambda}{\boldsymbol{\Lambda}}
\newcommand{\bXi}{\boldsymbol{\Xi}}
\newcommand{\bPi}{\boldsymbol{\Pi}}
\newcommand{\bOmega}{\boldsymbol{\Omega}}
\newcommand{\bUpsilon}{\boldsymbol{\Upsilon}}
\newcommand{\bPhi}{\boldsymbol{\Phi}}
\newcommand{\bPsi}{\boldsymbol{\Psi}}
\newcommand{\bSigma}{\boldsymbol{\Sigma}}

\newcommand{\bzero}{\boldsymbol{0}}

\newcommand{\geom}{{\textsf{Geometric}}}
\newcommand{\otherwise}{{\text{otherwise}}}
\newcommand{\prox}{{\operatorname{prox}}}
\newcommand{\vect}{{\operatorname{vec}}}
\newcommand{\supp}{{\operatorname{supp}}}
\newcommand{\KL}{{Kurdyka-\L ojasiewicz }}
\newcommand{\dist}{{\operatorname{dist}}}
\newcommand{\rank}{{\operatorname{rank}}}
\newcommand{\trace}{{\operatorname{tr}}}
\newcommand{\diag}{{\operatorname{diag}}}
\newcommand{\dom}{{\operatorname{dom}}}
\newcommand{\iid}{{\overset{iid}{\ \sim\ }}}

\section{Introduction}\label{sec:intro}

Biclustering is an unsupervised machine learning task that seeks to partition observations into groups of similar elements while simultaneously partitioning their features in a similar fashion. That is, when the data are organized as a matrix, biclustering seeks to learn groupings in both the row and column variables. In contrast to clustering, which seeks only to group the observations (rows), biclustering also uncovers how groups of features define or differentiate these groupings. Applications of biclustering are numerous in genomics \citep{chengBiclusteringExpressionData2000, klugerSpectralBiclusteringMicroarray2003, fadasonChromatinInteractionsExpression2018}, data mining \citep{busyginBiclusteringDataMining2008} and precision medicine \citep{nezhadSUBICSupervisedBiClustering2017}.

There is a rich literature related to biclustering, but many existing methods were not designed to scale to modern datasets. Biclustering is inherently a combinatorially challenging problem due to the many possible row and column partitions to consider.  Early methods performed a greedy approach that focused on searching data matrices for submatrices of similar magnitude \citep{chengBiclusteringExpressionData2000}. Many search strategies require parametric assumptions or heuristics to maintain a reasonable runtime \citep{shabalinFindingLargeAverage2009, liuMultiscaleScanStatistic2019}. Biclustering has also been studied from the lens of matrix decomposition. Popular algorithms such as PLAID seek a ``checkerboard'' structure among rows and columns of the data as sums of outer products between membership assignments and bicluster centroids \citep{lazzeroniPlaidModelsGene2002}. Assuming an additive model, the method sequentially solves least squares problems, and was later improved upon with a search strategy \citep{turnerImprovedBiclusteringMicroarray2005}. Noting the inherent low-rank structure behind biclustered data, other approaches enforce sparsity on singular values either by manually thresholding after singular value decomposition \citep{bergmannIterativeSignatureAlgorithm2003} or post-processing based on domain knowledge \citep{klugerSpectralBiclusteringMicroarray2003}. Sparsity-inducing penalties for regularizing singular vectors include the SSVD algorithm \citep{leeBiclusteringSparseSingular2010} and BCEL \citep{zhongBiclusteringStructuredRegularized2022}.
Both methods incorporate stability selection to improve performance by refining estimated biclusters \citep{sillRobustBiclusteringSparse2011}. Although the matrix factorization lens is highly flexible, it has been reported to perform poorly when the signal-to-noise ratio is low, common in high-dimensional settings \citep{nichollsComparisonSparseBiclustering2021}.

In many applications, data such as microarray gene expression or single-cell RNA-sequencing are collected over a large number of features that overwhelm the number of available observations. In such cases, often many features bear little to no signal useful towards biclustering. Our contributions focus improving biclustering in this setting by identifying informative features that are not only important toward successfully discriminating groups (rows) among the data, but also necessary to discard noise features (columns) that otherwise detract from the quality of the biclustering. So that existing methods not suited for high-dimensional data can apply, it is common practice to apply a preprocessing step such as PCA to heuristically discard most features. While this ad hoc fix can improve clustering performance compared to no preprocessing, it risks spurious findings, such as prematurely discarding important features. This article seeks a more principled approach that selects and weighs features jointly throughout the biclustering task.

To this end, we propose a new formulation to accomplish biclustering together with feature weighing. Our method draws upon recent ideas from the convex clustering and biclustering literature, aiming to better handle high-dimensional data while benefiting from stability and theoretical guarantees established for these methods. Convex clustering \citep{hockingClusterpathAlgorithmClustering2011} formulates the clustering task as a convex optimization problem
\begin{equation}
    \min_{\bU \in \mathbb R^{n\times p}} \gamma \sum_{i < j}^n \Phi_{ij} \|\bU_{i \cdot} - \bU_{j \cdot}\|_2 + \frac{1}{2} \|\bU - \bX\|^2_F, \label{eqn:clusterpath}
\end{equation}
where $\bX$ is a data matrix of $n$ samples and $p$ features and the optimization variable $\bU$ is of the same size. 
The objective \eqref{eqn:clusterpath} can be understood as a convex relaxation of single-linkage hierarchical clustering, where a ``fusion'' penalty encourages a unique number of rows, interpreted as cluster centroids. Here, $\gamma$ is a hyperparameter controlling the strength of fusion and $\Phi_{ij} \geq 0$ are affinities to fuse various rows. This convex formulation has many attractive properties, such as a continuous solution path with respect to $(\bX, \bPhi, \gamma)$ \citep{chiSplittingMethodsConvex2015}, computational efficiency from using a sparse $\bPhi$, and solutions not sensitive to initialization. The success of convex clustering has lead to a myriad of options for optimization \citep{chiSplittingMethodsConvex2015,weylandtDynamicVisualizationFast2020, panahiClusteringSumNorms2017, sunConvexClusteringModel2021}, parallel processing routines, \citep{fodorParallelADMMbasedConvex2022, zhouScalableAlgorithmsConvex2021} and multi-view data \citep{wangIntegrativeGeneralizedConvex2021}. 

Biclustering can be cast similarly by including an additional penalty term to promote column fusion, resulting in the objective
\begin{equation}
    \min_{\bU \in \mathbb R^{n\times p}} \gamma \brk3{\sum_{i < j}^n \Phi_{ij}\|\bU_{i\cdot}-\bU_{j\cdot}\|_2 + \sum_{k < \ell}^p \Psi_{k \ell} \|\bU_{\cdot k}-\bU_{\cdot \ell}\|_2 } + \frac{1}{2} \|\bX - \bU\|^2_F. \label{eqn:cbc}
\end{equation}
Problem \eqref{eqn:cbc} can be solved efficiently using COBRA \citep{chiConvexBiclustering2017}, a proximal method that alternates between convex clustering subroutines on the rows and columns until convergence. Several improvements were subsequently proposed using compression strategies \citep{yiCOBRACFastImplementation2021}, alternating direction method of multipliers (ADMM) \citep{wangNewAlgorithmConvex2023}, and generalized ADMM \citep{dengGlobalLinearConvergence2016, weylandtSplittingMethodsConvex2019}. Like the standard clustering counterpart, the efficacy of convex biclustering has lead to generalizations including tensor versions \citep{chiProvableConvexCoclustering2020}, loss functions robust to outliers \citep{chenRobustConvexBiclustering2025}, and supervised learning \citep{nezhadSUBICSupervisedBiClustering2017}.

While convex clustering and biclustering have seen practical success, these methods also begin to struggle as the dimension of the problem increases. In high dimensions, Euclidean distance loses ability to discriminate observations \citep{aggarwalSurprisingBehaviorDistance2001}; this is exacerbated as both the norms appearing in the objectives and commonly used affinities begin to lose fidelity. Affinities have large repercussions in practice, as we explore in Section \ref{section:problem}, and as evident from Equation \eqref{eqn:cbc} which requires two sets of affinities for biclustering regularization.

To make progress, we expect that in high-dimensional settings biclustering may remain feasible when a subset of columns are salient to the task. Equations \eqref{eqn:clusterpath} and \eqref{eqn:cbc} treat all features equally in their formulation, yet it is natural to identify and downweight less informative features. Feature selection and weighing have proven successful in the context of clustering: \citet{wittenFrameworkFeatureSelection2010} proposed a framework for a variety of clustering tasks by decomposing losses into weighted averages of the column measures of fit. SparseBC \citep{tanSparseBiclusteringTransposable2014} directly generalizes the $k$-means objective function to include column centroids together with a sparsity-inducing penalty.
SC-Biclust \citep{helgesonBiclusteringSparseClustering2020} extends these ideas to biclustering in a straightforward way by applying the algorithm in stages, and then testing the fitted weights for discrimination between column contributions to row clusters. Based on $k$-means, these methods inherit some of the known drawbacks, including sensitivity to initialization, difficulty with low signal-to-noise ratios, and inability to handle non-linear separation. 

Our proposed approach seeks to achieve the advantages of feature weighting while largely preserving the stability enjoyed by convex formulations. Recently, \citet{chakrabortyBiconvexClustering2023} made a small departure from convexity in the context of clustering by augmenting \eqref{eqn:clusterpath} with weights: 
\begin{equation}
  \begin{split}
    \min_{\bU \in \mathbb R^{n\times p}, \bw \in \mathbb R^p}\ & \gamma \sum_{i < j} \Phi_{ij} \|\bU_{i \cdot} - \bU_{j \cdot}\|^2_2 + \frac{1}{2} \sum_{\ell=1}^p (w_\ell^2 + \lambda w_\ell) \|\bU_{\cdot \ell} - \bX_{\cdot \ell}\|^2_2, \\
    \text{ subject to } & \sum_{\ell=1}^p w_\ell = 1, w_\ell \geq 0\label{eqn:bcc}.
  \end{split}
\end{equation}
With $\bw$ as an additional optimization variable, \eqref{eqn:bcc} is now biconvex: the optimization problem in either $\bU$ or $\bw$ is convex when the other is held fixed. We aim to overcome the difficulties of convex biclustering in high dimensions by changing the measure-of-fit in \eqref{eqn:cbc} to include learning column weights similar to \eqref{eqn:bcc} in a practical and theoretically sound framework.  To this end, we contribute a novel optimization scheme based on Proximal Alternating Linearized Minimization \citep{bolteProximalAlternatingLinearized2014}. Not only do we establish theoretical guarantees including finite-sample bounds but find that this formulation allows for existing efficient algorithms to be leveraged in solving the resulting subproblems. Moreover, this method does not require squared penalties on the fusion terms as in \eqref{eqn:bcc}, leading to better merging and more robustness to outliers as a result. A two-stage procedure is developed to tune the balance of the fusion and sparsity hyperparameters by principled approaches. In addition, we generalize statistical guarantees established for convex clustering.

In Section \ref{section:problem}, we define our problem statement and objective function while emphasizing the utility and importance of the affinities, $\bPhi$ and $\bPsi$. In Section \ref{section:methods}, we present the proposed Biconvex Biclustering (BCBC) method and an adaptive variant that updates the data affinities during data fitting. Section \ref{section:theory} shows a finite-sample bound for solutions of our objective function that allows for weighted connected affinity graphs, a generalization recently seen in \citet{linLowRank}, an application of convex clustering to low-rank matrix data. In Section \ref{section:simulations}, we show empirically that Adaptive BCBC fits consistently recover biclusters from simulated data, while weighing the features appropriately for their contribution to the biclustering. Finally, in Section \ref{section:real_data} we consider a case study on lymphoma microarray data and discover a variety of meaningful biclusters. Throughout this work, upper-case bold letters will denote matrices, while indexing with $\bU_{i\cdot}$ and $\bU_{\cdot \ell}$ signify the $i$th row and $\ell$th column of $\bU$, respectively. Lower-case bold letters denote vectors, with individual elements in nonbolded text, e.g. $w_{\ell}$ is the $\ell$th element of $\bw$. Symbols $n$ and $p$ will always refer to the number of rows (samples) and columns (features) of a matrix, respectively. In addition, $\|\cdot\|_F$ refers to the Frobenius norm, $\|\cdot\|_0$ the number of nonzero entries and occasionally we write the weighted norm as $\|\bX\|^2_{\bw} = \sum_{\ell = 1}^p w_\ell \|\bX_{\cdot \ell}\|_2^2$.

\section{Biconvex Formulation} \label{section:problem}

Let $\bX \in \mathbb R^{n\times p}$, such that there are $n$ samples of $p$ features. We seek to perform biclustering with penalty terms encouraging both column fusion and feature weighing, via minimizing the objective function 
\begin{equation} 
    \begin{split}
    F(\bU, \bw) & = \gamma \brk3{\sum_{i < j}^n \sqrt{\Phi_{ij}}\|\bU_{i\cdot}-\bU_{j\cdot}\|_2 + \sum_{k < \ell}^p \sqrt{\Psi_{k \ell} }\|\bU_{\cdot k}-\bU_{\cdot \ell}\|_2 } + \frac{1}{2} \|\bX - \bU\|^2_{\bw^2 + \lambda \bw}, \label{eqn:objective} \\
    \text{subject to } & \ \sum_{\ell = 1}^p w_\ell = 1, \quad w_\ell \geq 0.
    \end{split}
\end{equation}
The first term is convex but non-smooth with respect to $\bU$, composed of two fusion terms that encourage rows and columns, respectively, to coalesce toward shared centroids. The second term is biconvex in the variables $(\bU, \bw)$ and measures goodness-of-fit, weighted by how useful each feature is toward biclustering. The tuning parameter $\gamma>0$ modulates the tradeoff. The weights $\bw$ are optimized jointly along with $\bU$, allowing feature selection to occur simultaneously with biclustering. The weighted norm is inspired by \citet{wittenFrameworkFeatureSelection2010}, which avoids degenerate solutions under only an $\ell_1$ term by additionally including an $\ell_2$ constraint or penalty; \citet{chakrabortyBiconvexClustering2023} operate under a similar logic. The \textit{affinity terms} $\bPhi$ and $\bPsi$ are square-rooted for notational convenience of our theory in Section \ref{section:theory}.

In practice, the choices for affinities $\bPhi$ and $\bPsi$ have a dramatic effect on the solutions to \eqref{eqn:objective} \citep{chiWhyHowConvex2025}. Inspecting $\bPhi$, we see that only pairs $(i, j)$ where $\Phi_{ij} > 0$ encourage fusion between rows $i$ and $j$, eventually coalescing them to a single centroid as $\gamma$ increases. A higher number of \textit{informative} $\Phi_{ij} > 0$ will increase the solution quality, but including excess uninformative pairs both increases computational burden and may decrease solution quality. \citet{hockingClusterpathAlgorithmClustering2011} suggested setting affinities using the data with $\Phi_{ij} = \exp(-\tau \|\bX_{i\cdot} - \bX_{j\cdot}\|^2_2)$, for some hyperparameter $\tau > 0$. Empirical evidence shows this is a reasonable choice. \citet{chiSplittingMethodsConvex2015} expanded this idea further by suggesting setting all $\Phi_{ij} = 0$ when rows $i$ and $j$ are not $k$-nearest neighbors. Doing so reduces the total arithmetic calculations of fusion terms from $O(n^2 p)$ to $O(p \|\bPhi\|_0)$, a recommendation that is 
followed in many subsequent papers.  Equivalently, the affinity matrix $\bPhi$ can be treated as an adjacency matrix for some weighted simple graph, $G = (V, E, W)$, where $|V| = n, |E| = \|\bPhi\|_0 / 2$, and $W(i, j) = \Phi_{ij}$ represent each data point, edge, and resulting weight, respectively. We make use of this connection in our theoretical analysis in Section \ref{section:theory}.

\section{Methods} \label{section:methods}
To enable estimation, we note the objective can be rewritten as
\begin{equation}
    F(\bU, \bw) = f(\bU) + g(\bw) + H(\bU, \bw), \label{eqn:objective_short}
\end{equation}
where $f$ represents the fusion penalties, $g$ is the $0/\infty$ indicator function for whether $\bw$ satisfies the constraints in \eqref{eqn:objective}, and $H$ is the weighted loss. Exploiting the structure of \eqref{eqn:objective_short} will be key to efficient optimization, as $f$ and $g$ are both proper, lower semicontinuous and convex, while $H$ is smooth and biconvex. Moreover, the convex terms $f$ and $g$ are parameter separated, suggesting they may be handled individually within an alternating procedure. Because the biconvex term $H$ is smooth and can be split into blocks that mirror the separability of $f$ and $g$, it can also be incorporated within block alternating schemes. 

With these aspects in mind, we solve \eqref{eqn:objective_short} using Proximal Alternating Linearized Minimization (PALM) \citep{bolteProximalAlternatingLinearized2014}, an iterative scheme that solves simpler optimization subproblems in each block repeatedly. The subproblems consist of optimizing a quadratic approximation at a current iterate plus a linearization from the smooth term. In our setting, these approximations about $(\bU^0, \bw^0)$ take the form
\begin{align}
    F_{\bw}^\nu(\bU; \bU^0) & := f(\bU) + (\bU - \bU^0)^\top \nabla_{\bU} H(\bU^0, \bw) + \frac{\nu}{2} \|\bU - \bU^0\|^2_F,  \label{eq:palm1} \\
    F_{\bU}^\nu(\bw; \bw^0) & := g(\bw) + (\bw - \bw^0)^\top \nabla_{\bw} H(\bU, \bw^0) + \frac{\nu}{2} \|\bw - \bw^0\|^2_2. \label{eq:palm2}
\end{align}
Basic manipulations reveal that solutions to these subproblems are related to the proximal operator \citep{parikhProximalAlgorithms2014}
\begin{equation}
    \prox_{\nu,h}(u) := \argmin_{u'} h(u') + \frac{\nu}{2} \|u - u'\|_2^2.
\end{equation}
In particular, $\bw \in \argmin F_{\bU}^\nu(\cdot; \bw^0) \iff \bw \in \prox_{\nu, g}\brk[s]2{\bw^0 - \frac{1}{\nu} \nabla_{\bw} H(\bU, \bw^0)}$, with an analogous result for $\bU \in \argmin F_{\bw}^\nu(\cdot; \bU^0)$. We see that PALM iteratively optimizes objectives of the form \eqref{eqn:objective_short} by handling non-smooth terms via proximal smoothing. In particular, the PALM updates reduce our biclustering task to subproblems with known, well-behaved solutions as summarized in the following proposition.

\begin{proposition} The minimizers of subproblems \eqref{eq:palm1}, \eqref{eq:palm2} applied to 
the objective function \eqref{eqn:objective_short} are given by solving a biconvex clustering problem and projecting onto a weighted simplex, respectively.
\end{proposition}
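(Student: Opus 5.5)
The plan is to complete the square in each PALM subproblem. Each one then becomes a proximal problem whose form we can recognize. Throughout, write $\bD_{\bw} = \diag(\bw^2 + \lambda \bw)$ and $c_\ell(\bU) = \|\bX_{\cdot \ell} - \bU_{\cdot \ell}\|_2^2$. The smooth term is then $H(\bU,\bw) = \frac12 \sum_{\ell} (w_\ell^2 + \lambda w_\ell)\, c_\ell(\bU) = \frac12 \|(\bU - \bX)\bD_{\bw}^{1/2}\|_F^2$. First I compute the two partial gradients explicitly:
\begin{equation*}
\nabla_{\bU} H(\bU^0,\bw) = (\bU^0 - \bX)\bD_{\bw}, \qquad \big[\nabla_{\bw} H(\bU,\bw^0)\big]_\ell = \big(w^0_\ell + \tfrac{\lambda}{2}\big)\, c_\ell(\bU).
\end{equation*}
Both gradients are Lipschitz in their own block, which PALM needs. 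The $\bU$-block constant is $\max_\ell (w_\ell^2+\lambda w_\ell) \le 1+\lambda$ on the simplex. The $\bw$-block constant is $\max_\ell c_\ell(\bU)$.

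\textbf{$\bU$-subproblem.} I will read the linear term in \eqref{eq:palm1} as the trace inner product $\langle \bU - \bU^0, \nabla_{\bU}H\rangle$. Adding this to $\frac{\nu}{2}\|\bU-\bU^0\|_F^2$ and completing the square gives, up to a constant independent of $\bU$,
\begin{equation*}
F_{\bw}^\nu(\bU;\bU^0) = f(\bU) + \frac{\nu}{2}\big\|\bU - \bY\big\|_F^2, \qquad \bY := \bU^0 - \tfrac{1}{\nu}(\bU^0 - \bX)\bD_{\bw}.
\end{equation*}
Dividing by $\nu$ shows that the minimizer is $\prox_{\nu,f}(\bY)$. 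This is exactly the convex biclustering problem \eqref{eqn:cbc} with data $\bY$, fusion parameter $\gamma/\nu$, and affinities $\sqrt{\Phi_{ij}}$, $\sqrt{\Psi_{k\ell}}$. So it is a (convex) biclustering problem, which is what the statement refers to as the biclustering subproblem. Strong convexity of the objective gives a unique minimizer, and it can be computed with COBRA \citep{chiConvexBiclustering2017} or its variants.

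\textbf{$\bw$-subproblem.} The same square completion in \eqref{eq:palm2} gives
\begin{equation*}
\min_{\bw}\ g(\bw) + \frac{\nu}{2}\|\bw - \bz\|_2^2, \qquad z_\ell := w^0_\ell - \tfrac{1}{\nu}\big(w^0_\ell + \tfrac{\lambda}{2}\big)c_\ell(\bU).
\end{equation*}
Since $g$ is the indicator of the simplex, its proximal map is the Euclidean projection of $\bz$ onto the simplex. This projection has the standard closed form $w_\ell = (z_\ell - \theta)_+$, with $\theta$ found by sorting. I will also remark on exact block minimization in $\bw$, without linearization. That problem is $\min \frac12\sum_\ell c_\ell w_\ell^2 + \frac{\lambda}{2}\sum_\ell c_\ell w_\ell$ over the simplex. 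It equals a projection onto the simplex in the weighted norm $\sum_\ell c_\ell(\cdot)_\ell^2$ of the point with coordinates $-\lambda/2$. Its KKT solution is $w_\ell = (\theta - \lambda c_\ell/2)_+/c_\ell$, which is the "weighted simplex" projection. Either reading confirms the second claim.

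The main obstacle is interpretive bookkeeping rather than analysis. I must check that the transformed $\bU$-problem matches \eqref{eqn:cbc} exactly, including the $1/\nu$ rescaling of $\gamma$ and the square-rooted affinities. I must also confirm that the sign and scaling in $\bY$ are correct. I will verify both directly from the first-order optimality conditions $0 \in \partial f(\bU) + \nabla_{\bU}H(\bU^0,\bw) + \nu(\bU - \bU^0)$ and $0 \in \partial g(\bw) + \nabla_{\bw} H(\bU,\bw^0) + \nu(\bw-\bw^0)$.
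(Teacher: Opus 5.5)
Your proposal is correct and follows the paper's proof essentially step for step. You compute $\nabla_{\bU}H$ and $\nabla_{\bw}H$, complete the square so the $\bU$-step becomes convex biclustering on the shifted data $\bU^0 - \frac{1}{\nu}(\bU^0-\bX)\diag(\bw^2+\lambda\bw)$ with fusion parameter $\gamma/\nu$, and recognize the $\bw$-step as Euclidean projection of $\bz$ onto the simplex. Your extra remark is also correct: the exact block minimization in $\bw$ (the final step of Algorithm~\ref{alg:bcbc}) is a projection in the $c_\ell$-weighted norm, and this explains the statement's ``weighted simplex'' wording better than the paper's own proof, which only exhibits an ordinary simplex projection.
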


\begin{proof} We begin by writing the derivatives of $H$
\begin{align}
    \nabla_{\bU_{\cdot \ell}} H(\bU, \bw) & = (w_\ell^2 + \lambda w_\ell) (\bU_{\cdot \ell} - \bX_{\cdot \ell}), \\
    \nabla_{w_\ell} H(\bU, \bw) & = (w_\ell + \lambda/2) D_\ell(\bU),
\end{align}
where $D(\bU)\colon \mathbb R^{n\times p} \mapsto \mathbb R^p$ with $D_\ell(\bU) = \|\bX_{\cdot \ell} - \bU_{\cdot \ell}\|^2_2$. The first subproblem $\prox_{\nu_1, f}\brk[s]2{\bU^0 - \frac{1}{\nu_1} \nabla_{\bU} H(\bU^0, \bw)}$ with $\bw$ fixed starting from an initial point $\bU^0$ is given by
\begin{equation}
\begin{split}
    & \argmin_{\bU \in \mathbb R^{n \times p}} \frac{\gamma}{\nu_1} \brk3{\sum_{i < j}^n \sqrt{\Phi_{ij}}\|\bU_{i\cdot}-\bU_{j\cdot}\|_2 + \sum_{k < \ell}^p \sqrt{\Psi_{k \ell}} \|\bU_{\cdot k}-\bU_{\cdot \ell}\|_2 } \\ & \quad + \frac{1}{2} \norm2{\bU^0 - \frac{1}{\nu_1} (\bU^0 - \bX) \diag(\bw^2 + \lambda \bw) - \bU}_F^2.
    \label{eqn:U_step}
\end{split}
\end{equation}
We recognize this objective can be viewed equivalently as a convex biclustering task \eqref{eqn:cbc}, where the data matrix is $\bU^0 - \frac{1}{\nu_1} (\bU^0 - \bX) \diag(\bw^2 + \lambda \bw)$. This subproblem can thus be solved directly using COBRA or other viable alternatives such as \citet{yiCOBRACFastImplementation2021} or \citet{wangNewAlgorithmConvex2023}.

The remaining subproblem, $\prox_{\nu_2, g}\brk[s]2{\bw^0 - \frac{1}{\nu_2} \nabla_{\bw} H(\bU, \bw^0)}$, with $\bU$ fixed and an initial point $\bw^0$ is given by 
\begin{equation}
  \argmin_{\substack{\bw\colon \sum w_\ell = 1 \\ w_\ell \geq 0 }} \sum_{\ell = 1}^p \brk[s]2{w^0_\ell - \frac{1}{\nu_2}\brk2{w^0_\ell + \frac{\lambda}{2}}\|\bX_{\cdot \ell} - \bU_{\cdot \ell}\|^2_2 - w_\ell }^2. \label{eqn:w_step}
\end{equation}
This amounts to orthogonal projection of $\bw^0 - \frac{1}{\nu_2}\brk1{\bw_{0} + {\lambda} \mathbf 1_p/2 } \odot D(\bU) $ onto the simplex, where $\odot$ is the Hadamard product. See \citet{chenProjectionSimplex2011} and \citet{condatFastProjectionSimplex2016a} for details on fast projection onto the simplex. \end{proof} 

In practice, we declare convergence using a relative error stopping criterion on the iterates $\bU^k$, followed by a single block-optimization step on the final $\bw^k$ with $\bU$ fixed
(see Theorem 1 of \citet{chakrabortyBiconvexClustering2023}). A pseudocode summary is provided in Algorithm \ref{alg:bcbc}.

\begin{algorithm}[!ht]
\caption{Optimizing Biconvex Biclustering via PALM} 
\label{alg:bcbc}
\hspace*{0.00in} {\bf Input:} Data points $\bX$, $\bw^0 \in \mathbb R^p, \sum_{\ell=1}^p w^0_{\ell} = 1, \nu^- > 0$
\begin{algorithmic}[1]
    \State Initialize $k \leftarrow 0$, $\bU^0 \leftarrow \bX$.
    \While{$\bU$ has not converged}
        \State $\nu_1 \leftarrow \max[1, 2 L_1(\bw^k)]$ from \eqref{eqn:lip_constants}
        \State $\bU^{k+1} \leftarrow \prox_{\nu_1, f}[\bU^k - \nabla_{\bU} H(\bU^k, \bw^k) / \nu_1]$ (Convex biclustering via \eqref{eqn:U_step})
        \State $\nu_2 \leftarrow \max[\nu^-, 2 L_2(\bU^{k+1})]$ from \eqref{eqn:lip_constants}
        \State $\bw^{k+1} \leftarrow \prox_{\nu_2, g}[\bw^k - \nabla_{\bw} H(\bU^{k+1}, \bw^k) / \nu_2]$ (Project to simplex via \eqref{eqn:w_step})
        \State $k \leftarrow k + 1$
    \EndWhile
    \State $\hat \bw \leftarrow \argmin_{\bw} g(\bw) + H(\bU^k, \bw) $ (Coordinate descent step on $\bw$ block)
    \State \Return $\bU^k, \hat \bw$.
\end{algorithmic}
\end{algorithm}

\subsection{Convergence}\label{section:convergence_details}

To guarantee that Algorithm \ref{alg:bcbc} converges to the limit points of the objective, PALM requires the following assumptions:
\begin{itemize}
    \item \textbf{Assumption 1}: $f, g$ are both proper and lower semicontinuous. $H$ is a $C^1$ function.
    
    \item \textbf{Assumption 2(i)}: $\inf F > -\infty, \inf f > -\infty, \inf g > -\infty$.
    
    \item \textbf{Assumption 2(ii)}: For a fixed $\bw$, the function $H(\bU,  \bw)$ has a partial gradient $\nabla_{\bU} H(\bU, \bw)$ which is globally Lipschitz, i.e.
    \begin{equation}
        \|\nabla_{\bU} H(\bU^1; \bw) - \nabla_{\bU} H(\bU^2; \bw)\|_F \leq L_1(\bw) \|\bU^1 - \bU^2\|_F. \label{eqn:assumption2iix}
    \end{equation}
    Similarly, for a fixed $\bU$, the function $H(\bU, \bw)$ has a partial gradient $\nabla_{\bw} H(\bU, \bw)$ which is globally Lipschitz, i.e.
    \begin{equation}
        \|\nabla_{\bw} H(\bU; \bw^1) - \nabla_{\bw} H(\bU; \bw^2)\|_2 \leq L_2(\bU) \|\bw^1 - \bw^2\|_2. \label{eqn:assumption2iiy}
    \end{equation}
It is not hard to see that these are all satisfied by \eqref{eqn:objective_short}. In particular, \eqref{eqn:assumption2iix} and \eqref{eqn:assumption2iiy} are satisfied with 
    \begin{equation}
        L_1(\bw) = \max_\ell w_\ell^2 + \lambda w_\ell; \qquad     L_2(\bU) = \sqrt{\sum_{\ell=1}^p \|\bX_{\cdot \ell} - \bU_{\cdot \ell}\|_2^4 } \label{eqn:lip_constants}.
    \end{equation}
Calculation of these Lipschitz constants affects the dynamic step sizes throughout PALM. When substituting $\nu$ with equations \eqref{eqn:lip_constants}, the subproblems \eqref{eqn:U_step} and \eqref{eqn:w_step} gain some intuition as updates to one block are based on normalization of the residuals based off the other block. For example, step 6 of Algorithm \ref{alg:bcbc} is an orthogonal projection of $\bw^k - \brk1{\bw^k + {\lambda} \mathbf 1_p/2 } \odot \frac{D(\bU^{k+1})}{L_2(\bU^{k+1})} $, where the right term of the Hadamard product is a unit vector.

    \item \textbf{Assumption 2(iii)}: The Lipschitz constants from \eqref{eqn:lip_constants} of the PALM iterates, $(\bU^k, \bw^k)$ is bounded above 0 and below infinity. This is satisfied by
    \begin{align}
        \inf_k L_1(\bw^k) & \geq \frac{1}{p^2} + \frac{\lambda}{p}, &
        \sup_k L_1(\bw^k) & \leq 1+\lambda, \\
        \inf_k L_2(\bU^k) & > 0, &
        \sup_k L_2(\bU^k) & < \infty.
    \end{align}
    There are clear bounds of $L_1(\bw)$ for all $\bw$ in the domain of $F$. For $L_2(\bU^k)$ we can create an artificial lower bound by setting $L'_2(\bU^k) = \max[c, L_2(\bU^k)]$ as recommended by \citet{bolteProximalAlternatingLinearized2014}. Although there is not an explicit upper bound, in practice the iterates $\bU^k$ need to approximate $\bX$ because the objective function decreases throughout iteration, allowing us to assume $L_2(\bU^k)$ will never be too large.

    \item \textbf{Assumption 2(iv)}: $\nabla H$ is Lipschitz continuous on bounded subsets of $\mathbb R^{n \times p} \times \mathbb R^p$:
    \begin{equation}
        \|\nabla H(\bU^1, \bw^1) - \nabla H(\bU^2, \bw^2)\|_2 \leq M\|(\bU^1 - \bU^2, \bw^1 - \bw^2)\|_2,
    \end{equation}
    where $M > 0$. As noted in Remark 3 of \citet{bolteProximalAlternatingLinearized2014}, this is satisfied because $H$ is twice continuously differentiable.

\end{itemize}

We see it is sufficient to take $\nu_1 = c_1 L_1(\bw^k)$ and $\nu_2 = c_2 L_2(\bU^k)$ where $c_1, c_2 > 1$ in order to guarantee convergence of Algorithm \ref{alg:bcbc}. Notably $\nu_1$ will rescale the hyperparameter $\gamma$ when solving convex biclustering subproblems (see \eqref{eqn:U_step}), so for numerical stability and ensuring problems are on the same scale, we force $\nu_1 = 1$ for almost all iterations by having $c_1 = \max[1/L_1(\bw^k), 2]$. The Lipschitz constant from \eqref{eqn:lip_constants} is almost always less than 1 in high-dimensional cases as both $\lambda$ and $w_\ell$ are on the scale of $1 / p$. For $\nu_2$, we choose $c_2 = 2$ for simplicity. 

The above assumptions ensure Algorithm \ref{alg:bcbc} converges, but if the objective function satisfies the \KL property then we can ensure limit points are also stationary points of the objective.

\begin{proposition} \label{proposition:FisKL}
    $F(\bU, \bw)$ satisfies the \KL property everywhere. As a result, Algorithm \ref{alg:bcbc} converges to a stationary point of \eqref{eqn:objective}.
\end{proposition}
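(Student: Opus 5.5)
Our plan is to show that $F$ is a semi-algebraic function, since the results of \citet{bolteProximalAlternatingLinearized2014} state that every proper, lower semicontinuous, semi-algebraic function satisfies the \KL property at every point of its domain. We will verify this term by term in the decomposition \eqref{eqn:objective_short}. Stability of semi-algebraic functions under finite sums then gives the claim for $F$. We need one preliminary remark: if the affinities $\sqrt{\Phi_{ij}}$, $\sqrt{\Psi_{k\ell}}$ and $\gamma,\lambda$ are arbitrary reals they need not be algebraic numbers. So we use the fact that semi-algebraicity in \citet{bolteProximalAlternatingLinearized2014} is meant over $\mathbb R$ with real coefficients allowed in the defining polynomials. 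Then any fixed real constant may appear as a coefficient.

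\textbf{Term $H$.} $H(\bU,\bw)=\frac12\sum_\ell (w_\ell^2+\lambda w_\ell)\|\bX_{\cdot\ell}-\bU_{\cdot\ell}\|_2^2$ is a polynomial in the entries of $(\bU,\bw)$. Its graph is therefore the zero set of a polynomial, hence semi-algebraic.

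\textbf{Term $g$.} $g$ is the indicator of the simplex $\{\bw:\sum_\ell w_\ell=1,\ w_\ell\ge 0\}$, which is defined by finitely many polynomial equalities and inequalities. Indicators of semi-algebraic sets are semi-algebraic, and $g$ is proper and lower semicontinuous because the simplex is nonempty and closed.

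\textbf{Term $f$.} $f$ is a finite nonnegative combination of Euclidean norms of linear maps of $\bU$. The Euclidean norm is semi-algebraic, since its graph $\{(x,t): t\ge 0,\ t^2=\|x\|_2^2\}$ is semi-algebraic. Compositions of semi-algebraic maps with linear maps, scalings and finite sums are semi-algebraic by Tarski--Seidenberg. Moreover $f$ is continuous, hence lower semicontinuous.

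It then follows that $F=f+g+H$ is proper, lower semicontinuous and semi-algebraic, so it satisfies the \KL property everywhere on $\dom F$. At points outside $\dom F$ the property is vacuous or conventionally holds, depending on the convention used.

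For the convergence statement, we invoke the main theorem of \citet{bolteProximalAlternatingLinearized2014}. Under Assumptions 1 and 2(i)--(iv) verified above, plus the \KL property, any \emph{bounded} PALM sequence has finite length and converges to a critical point of $F$. The main obstacle is boundedness of the iterates. The $\bw^k$ lie in the compact simplex. For $\bU^k$, we will use the sufficient decrease property of PALM, which gives $F(\bU^k,\bw^k)\le F(\bU^0,\bw^0)$. The coercivity we would try to exploit is that the fusion penalty $f$ alone is not coercive, since it is invariant to adding a constant matrix. Coercivity must therefore come from the loss term, which requires $\sum_\ell (w_\ell^2+\lambda w_\ell)\|\bX_{\cdot\ell}-\bU_{\cdot\ell}\|^2$ to control $\bU$. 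When $\lambda>0$, every weight on the simplex has $\sum_\ell(w_\ell^2+\lambda w_\ell)\ge 1/p+\lambda>0$. But if some $w_\ell=0$, the corresponding column is controlled only through column fusion to other columns. Our first attempt will therefore combine the loss term with the connectivity of the graphs defined by $\bPhi$ and $\bPsi$ to bound $\|\bU^k\|_F$ on the sublevel set. If that fails, we fall back on the paper's stated standing assumption that $L_2(\bU^k)$ stays bounded, which is equivalent to boundedness of $\bX-\bU^k$ and hence of $\bU^k$. Finally, a critical point in the limiting-subdifferential sense is exactly a stationary point of \eqref{eqn:objective}.
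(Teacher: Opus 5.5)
Your proposal is correct and follows the paper's proof essentially step for step. Both show that $f$ (a finite sum of norms), $g$ (the indicator of the simplex) and $H$ (a polynomial) are semi-algebraic, use closure under finite sums and the Bolte--Daniilidis--Lewis theorem to conclude that $F$ is a KL function, and then invoke the PALM convergence theorem.

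Your flag on the boundedness of $\bU^k$ concerns a point the paper only handles by assumption in 2(iii), and your first attempt does close it whenever the column graph of $\bPsi$ is connected and $\gamma>0$. On the simplex some $w_\ell \ge 1/p$, so the sublevel bound $F(\bU^k,\bw^k)\le F(\bU^0,\bw^0)$ controls that column of $\bX-\bU^k$. The column-fusion term then bounds every other column of $\bU^k$ along a path of at most $p-1$ edges, so you never need the fallback.
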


We defer relevant definitions and proofs to Section \ref{appendix:kl} of the appendix.

\subsection{Adaptive BCBC}
The objective function for biconvex biclustering \eqref{eqn:objective} can be modified slightly so that the affinities can be learned in a data-adaptive fashion. Consider
\begin{equation}  
    \begin{split} 
        F (\bU, \bw, \bPhi, \bPsi) & = \gamma \brk3{\sum_{i < j}^n \sqrt{\Phi_{ij}}\|\bU_{i\cdot}-\bU_{j\cdot}\|_2 + \sum_{k < \ell}^p \sqrt{\Psi_{k \ell} }\|\bU_{\cdot k}-\bU_{\cdot \ell}\|_2 } + \frac{1}{2} \|\bX - \bU\|^2_{\bw^2 + \lambda \bw},  \\
        \text{subject to } & \ \sum_{\ell = 1}^p w_\ell = 1, \quad w_\ell \geq 0, \quad \bPhi \geq 0, \quad \bPsi \geq 0.
    \end{split} \label{eqn:objective_adapt}
\end{equation}
As discussed in the introduction, $\bPhi$ and $\bPsi$ are essential for performing the necessary fusion to have viable clusters.
When $\bPhi$ and $\bPsi$ are calculated from the raw data, they may suffer from the large number of noise features just as the original biconvex clustering objective struggles. It is therefore intuitive to update the affinities while fitting BCBC, as the learned weighted feature space can alleviate this issue. An adaptive variant can be obtained immediately upon inserting an additional optional update in Algorithm \ref{alg:bcbc}, performing $k$-nearest neighbors on updated rows and columns of $\bU$ under a Gaussian kernel. These updates follow the intuition in \citep{chiConvexBiclustering2017}, and are given by  
\begin{align}
    \tilde \Phi^k_{ij} & = \mathbbm{1}(\text{row $i$ is NN with row $j$})  \exp\brk1{- \tau \|\bU^k_{i\cdot} - \bU^k_{j\cdot}\|^2_2/p }, \label{eqn:normalized_fusions1} \\
    \tilde \Psi^k_{ \ell m} & = \mathbbm{1}(\text{column $\ell$ is NN with column $m$}) \exp\brk1{- \tau \|\bU^k_{\cdot \ell} - \bU^k_{\cdot m}\|^2_2 / n}, \\
    \sqrt{\Phi_{ij}^k} & = \frac{\tilde \Phi^k_{ij}}{\sqrt{p}\sum_{i',j'} \tilde \Phi^k_{i'j'}}, \\
    \sqrt{\Psi^k_{\ell m}} & = \frac{\tilde \Psi^k_{\ell m}}{\sqrt{n}\sum_{\ell', m'} \tilde \Psi^k_{\ell' m'}}. \label{eqn:normalized_fusions2}
\end{align}

We also note that using approximate nearest neighbor methods such as HNSW \citep{malkovEfficientRobustApproximate2018} in place of exact $k$-NN can speed up Algorithm \ref{alg:bcbc_adapt} with essentially no loss of performance.

\begin{algorithm}[!ht]
\caption{Optimizing Adaptive Biconvex Biclustering via PALM and KNN} 
\label{alg:bcbc_adapt}
\hspace*{0.00in} {\bf Input:} Data points $\bX$, $\bw^0 \in \mathbb R^p, \sum_{\ell=1}^p w^0_{\ell} = 1, \nu^- > 0, \bPhi^0, \bPsi^0$
\begin{algorithmic}[1]
    \State Initialize $k \leftarrow 0$, $\bU^0 \leftarrow \bX$.
    \While{not converged}
        \State $\nu_1 \leftarrow \max[1, 2 L_1(\bw^k)]$ from \eqref{eqn:lip_constants}
        \State $\bU^{k+1} \leftarrow \prox_{\nu_1, f}[\bU^k - \nabla_{\bU} H(\bU^k, \bw^k) / \nu_1]$ (Convex biclustering via \eqref{eqn:U_step})
        \State $\nu_2 \leftarrow \max[\nu^-, 2 L_2(\bU)]$ from \eqref{eqn:lip_constants}
        \State $\bw^{k+1} \leftarrow \prox_{\nu_2, g}[\bw^k - \nabla_{\bw} H(\bU^{k+1}, \bw^k) / \nu_2]$ (Project to simplex via \eqref{eqn:w_step})
        \State $(\bPhi^{k+1}, \bPsi^{k+1}) \leftarrow$ GKNN affinities with $\bU^{k+1}$ from equations \eqref{eqn:normalized_fusions1}--\eqref{eqn:normalized_fusions2}.
        \State $k \leftarrow k + 1$
    \EndWhile
    \State $\hat \bw \leftarrow \argmin_{\bw} g(\bw) + H(\bU^k, \bw) $ (Coordinate descent step on $\bw$ block)
    \State \Return $\bU^k, \hat \bw$.
\end{algorithmic}
\end{algorithm}

\subsection{Hyperparameter Selection}\label{section:cv}

Following \citet{tanSparseBiclusteringTransposable2014}, we recommend a two-stage tuning approach with an initial stage to recommend a value for $\gamma$, and a second stage to evaluate fits for varying values of $\lambda$. Some computational burden is relieved as not every possible pair of $(\gamma, \lambda)$ is fit while allowing each hyperparameter to be optimized by principled approaches.

\paragraph*{Tuning \texorpdfstring{$\gamma$}{gamma}} Similar to \citet{chiConvexBiclustering2017}, we create a hold-out set of data points and perform a missing data fit to tune $\gamma$. Let $\mathcal I$ be a set of observed indices. By default, we recommend $|\mathcal I| \approx 0.85 \cdot np$ and chosen independently at random; however, tuning via the hold-out problem may be more effective if $|\mathcal I|$ decreases as the number of uninformative columns increases. The hold-out problem is written as:
\begin{align}
    \begin{split} \label{eqn:missing_objective}
        \tilde F_{\gamma, \lambda}(\bU, \bw) & := \gamma \brk3{\sum_{i < j}^n \sqrt{\Phi_{ij}}\|\bU_{i\cdot}-\bU_{j\cdot}\|_2 + \sum_{k < \ell}^p \sqrt{\Psi_{k \ell}} \|\bU_{\cdot k} - \bU_{\cdot \ell}\|_2 } \\ & \quad + \frac{1}{2}\sum_{\ell=1}^p (w_\ell^2 + \lambda w_\ell) \sum_{i = 1}^n \begin{cases}
            (U_{i\ell} - X_{i\ell})^2 & (i, \ell) \in \mathcal I \\
            0 & \otherwise
        \end{cases}, \\
    \text{subject to } & \ \sum_{\ell = 1}^p w_\ell = 1, \quad w_\ell \geq 0.
    \end{split}
\end{align}
See Section \ref{appendix:hold-out} of the appendix for details on efficiently optimizing \eqref{eqn:missing_objective}. In order to choose an optimized $\gamma^*$ we recommend solving the hold-out problem, and calculating the hold-out sum of squared errors for each $\gamma$:
\begin{align}
    \bU^*(\gamma) & = \argmin_{\bU} \tilde F_{\gamma, 0}(\bU, \bw), \\
    \gamma^* & = \argmin_{\gamma \in \{\gamma_1, \ldots, \gamma_m\}} \sum_{(i, j) \in \mathcal I^c} [U^*(\gamma)_{ij} - X_{ij}]^2.
\end{align}
Notably, we set $\lambda = 0$ for this tuning stage to avoid overzealously removing features, since this is reserved for the following stage. The value of $\lambda$ used has little effect on the hold-out sum of squared errors, since any hold-out value is in a column either:

\begin{enumerate}
    \tightlist
    \item Useful for clustering, in which case fits should avoid erroneously assigning zero weight and an appropriate value of $\gamma$ should estimate the hold-out value.
    \item Not useful for clustering, in which case whether or not a fit assigns zero weight, any value of $\gamma$ or $\lambda$ will have equal difficulty in estimating the hold-out value.
\end{enumerate}

\paragraph*{Tuning \texorpdfstring{$\lambda$}{lambda}} Though $\lambda$ is more directly related to the columns of $\bU$ as it implies a weighted norm on the feature space, its value impacts both row clusters as well as column clusters. It is therefore natural to tune via a criterion that directly manages the tradeoff between goodness-of-fit and model complexity as a function of $\lambda$. We advocate a version of the extended Bayesian Information Criteria (eBIC) \citep{chenEXTENDEDBICSMALLnLARGEP2012}, which has proven effective for balancing complexity in other biclustering methods even when the loss is not a likelihood proper \citep{leeBiclusteringSparseSingular2010, chiProvableConvexCoclustering2020, tanSparseBiclusteringTransposable2014}. \citet{tanStatisticalPropertiesConvex2015} provides justification for using the number of unique centroids as the degrees of freedom for a BIC calculation while \citet{chiProvableConvexCoclustering2020} shows strong empirical performance with a similar idea. 
The eBIC criterion we use is:
\begin{equation}
    \text{eBIC} = np \log \brk3{\frac{\|\bU^\star(r_1, r_2) - \bX\|^2_F}{np}} + 2 \log(np) \times \hat {df}(r_1, r_2), \label{eqn:eBIC}
\end{equation}
where $\hat {df}(r_1, r_2)$ is the number of unique biclusters and $\bU^\star(r_1, r_2)$ is defined
\begin{equation}
\begin{split}
    \bU^\star_{ik}(r_1, r_2) & = \operatorname{mean}(\{X_{j\ell} \colon \bU^\star_{i\cdot}(r_1) = \bU^\star_{j\cdot}(r_1), \bU^\star_{\cdot k}(r_2) = \bU^\star_{\cdot \ell}(r_2)\}), \quad (\bw_k > 0); \\
    \bU^\star_{ik}(r_1, r_2) & = \operatorname{mean}(\{X_{j\ell} \colon \hat \bw_\ell = 0\}), \quad (\bw_k = 0). 
\end{split}\label{eqn:U-star}
\end{equation}
Calculating $\bU^\star$ is a computationally cheap operation,
and we adopt a standard approach to assign biclusters from the solutions $\hat{\bU}, \hat{\bw}$ following \citet{chiConvexBiclustering2017}, declaring two rows to belong to the same cluster if the weighted distance between them is smaller than a threshold based on the standard deviation of all pairwise weighted distances. Column clusters are handled analogously, with complete details included Section \ref{appendix:cluster-assignments} of the appendix.

\section{Finite-Sample Bounds for Biconvex Biclustering} \label{section:theory}

We now describe finite-sample bounds on the prediction error that apply to local optima of \eqref{eqn:objective}, assuming independent sub-Gaussian errors and fixed connected affinity graphs. Even though calculating the global solution to \eqref{eqn:objective} is difficult due to non-convexity, any local optima, e.g. outputs from Algorithm \ref{alg:bcbc}, will still retain desirable theoretical properties. 
In contrast to previous results in the convex clustering literature, which assume $\Phi_{ij} = 1$ for all $i \neq j$, we allow $\bPhi$ to be the adjacency matrix of any fixed connected graph. The link between performance and affinity graph structure is summarized in Theorem \ref{thm:main} in generality by the algebraic connectivity, characterized by the smallest nonzero eigenvalue of the graph Laplacian. This theoretical support better aligns with what is done in practice, as fully uniform affinity weights lack both sparsity and give a less robust fusion penalty. Connectivity is a mild assumption: if the underlying affinity graphs had multiple connected components, one may apply a divide and conquer approach and run multiple instances of our method on each connected component. The following result is a generalization of some results from \cite{chakrabortyBiconvexClustering2023}, but now with respect to any learned weighted norm and more practical affinities.

\begin{theorem}\label{thm:main}
    Suppose that $\bX = \bU + \bE$, where $\bE \in \mathbb R^{n \times p}$ is composed of independent sub-Gaussian random variables with mean zero and variance $\sigma^2$. Let $\bPhi$ and $\bPsi$ be weighted adjacency matrices of connected graphs, independent of $\bE$, with $n$ and $p$ vertices, respectively. Let $\hat \bU$ and $\hat \bw$ be  local minimizers of \eqref{eqn:objective}.
    If $\frac{\gamma}{np} > \frac{\sigma(1+\lambda)}{\sqrt{np}} \max \brk3{\sqrt{\frac{\log(p \|\bPhi\|_0)}{n \eta(\bPhi)}}, \sqrt{\frac{\log(n \|\bPsi\|_0)}{{p \eta(\bPsi)}}}} $ then
    \begin{equation}
    \begin{split}
        \frac{1}{2np} \|\bU - \hat \bU\|^2_{\hat \bw^2 + \lambda \hat \bw}
        & \leq \frac{\sigma^2 (1+\lambda)}{2} \brk[s]3{\frac{1}{n} + \frac{1}{p} + \sqrt{\frac{\log(np)}{n p^2}} + \sqrt{\frac{\log(np)}{n^2 p}}} \\
            & \quad + \frac{2\gamma}{np} \brk3{ \sum_{i < j} \sqrt{\Phi_{ij}} \|\bU_{i \cdot} - \bU_{j \cdot}\|_2 + \sum_{k < \ell} \sqrt{\Psi_{k \ell}} \|\bU_{\cdot k} - \bU_{\cdot \ell}\|_2},
    \end{split} \label{eqn:mainthm}
    \end{equation}
    holds with probability at least $1 - 2 \exp\brk[c]3{-C \min \brk[s]2{\log(np),  \sqrt{\min(n, p) \log(np)}}} - \frac{8}{np}$, where $\eta(\bA)$ is the algebraic connectivity of the graph formed by the adjacency matrix $\bA$, $\|\bA\|_0$ is the number of nonzero entries (edges), and $C > 0$ is some constant.
\end{theorem}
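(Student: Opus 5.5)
The plan is a basic-inequality argument in the $\bU$ block alone, with $\hat\bw$ frozen. Write $\bW = \diag(\hat\bw^2 + \lambda\hat\bw)$ and let $P(\bV)$ denote the bracketed fusion penalty in \eqref{eqn:objective}.

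\textbf{Basic inequality.} For fixed $\hat\bw$, the map $\bV \mapsto F(\bV,\hat\bw)$ is convex: it is a weighted least-squares term plus norms. A local minimizer $\hat\bU$ of this convex map is therefore a global minimizer over $\bV$. Comparing with the truth $\bV = \bU$ and substituting $\bX = \bU + \bE$ with $\bDelta = \hat\bU - \bU$ gives
\begin{equation*}
\tfrac12 \|\bDelta\|^2_{\bW} \le \langle \bE\bW, \bDelta\rangle + \gamma\big(P(\bU) - P(\hat\bU)\big).
\end{equation*}
The only nonconvexity, the coupling with $\bw$, is thus sidestepped. We only use that every diagonal entry of $\bW$ lies in $[0, 1+\lambda]$, because $\hat\bw$ lies in the simplex.

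\textbf{Decomposing the cross term.} Let $\bPi_n = \tfrac1n \mathbf 1\mathbf 1^\top$ and $\bPi_p$ be the analogous projection for columns. Split
\begin{equation*}
\bDelta = (\bI - \bPi_n)\bDelta + \bPi_n\bDelta(\bI - \bPi_p) + \bPi_n\bDelta\bPi_p .
\end{equation*}

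\emph{Row-fusion piece.} Let $\bD_\Phi$ be the weighted edge--vertex incidence matrix whose rows are $\sqrt{\Phi_{ij}}(\be_i - \be_j)^\top$. Then $\bD_\Phi^\top\bD_\Phi$ is the graph Laplacian. Because the graph is connected, $\bI - \bPi_n = \bD_\Phi^{+}\bD_\Phi$, so
\begin{equation*}
\langle \bE\bW, (\bI-\bPi_n)\bDelta\rangle = \langle (\bD_\Phi^{+})^\top \bE\bW, \bD_\Phi\bDelta\rangle \le \max_{(i,j)} \big\|[(\bD_\Phi^{+})^\top\bE\bW]_{(ij)\cdot}\big\|_2 \sum_{i<j}\sqrt{\Phi_{ij}}\,\|\bDelta_{i\cdot}-\bDelta_{j\cdot}\|_2 .
\end{equation*}
Each column of $\bD_\Phi^{+}$ has norm at most $\eta(\bPhi)^{-1/2}$. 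Hence each entry of $(\bD_\Phi^{+})^\top\bE\bW$ is sub-Gaussian with parameter at most $\sigma(1+\lambda)/\sqrt{\eta(\bPhi)}$. A union bound over the $p\|\bPhi\|_0$ edge--column pairs bounds the maximum by roughly $\sigma(1+\lambda)\sqrt{p\log(p\|\bPhi\|_0)/\eta(\bPhi)}$.

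One subtlety: $\bW$ depends on $\bE$ through $\hat\bw$, so the union bound cannot be applied to $\bE\bW$ directly. I would handle this by bounding $\|\cdot\|_2$ of $\bE\bW$ rows columnwise through $\max_\ell W_{\ell\ell} \le 1+\lambda$. Concretely, the union bound is taken for $\bE$ alone, and $\bW$ is pulled out as a diagonal contraction. This is valid because $\bPhi$ is independent of $\bE$ and $\bW$ acts column by column.

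\emph{Column-fusion piece.} The piece $\bPi_n\bDelta(\bI-\bPi_p)$ is treated symmetrically with $\bD_\Psi$, giving $\sqrt{n\log(n\|\bPsi\|_0)/\eta(\bPsi)}$ times the column fusion of $\bDelta$. The constant projections are contractions for these fusion norms, so the fusion norms of the projected pieces are bounded by those of $\bDelta$.

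\textbf{Absorbing the fusion terms.} The hypothesis on $\gamma$ is exactly that $\gamma$ exceeds both noise levels. Since $P(\bDelta) \le P(\hat\bU) + P(\bU)$, the first two cross terms are at most $\gamma P(\hat\bU) + \gamma P(\bU)$. Combined with the $\gamma(P(\bU) - P(\hat\bU))$ already on the right, they leave $2\gamma P(\bU)$, which is the last term of \eqref{eqn:mainthm} after dividing by $np$.

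\textbf{Mean piece and the $\sigma^2$ terms.} The remaining piece $\bPi_n\bDelta\bPi_p$, together with the residual projections of $\bE$, is handled by Cauchy--Schwarz and Young's inequality, $\langle \bA,\bB\rangle_{\bW} \le \tfrac12\|\bA\|^2_{\bW} + \tfrac12\|\bB\|^2_{\bW}$. This absorbs part of $\tfrac12\|\bDelta\|^2_{\bW}$ into the left side and leaves $\tfrac{1+\lambda}{2}$ times quantities of the form $\|\bPi_n\bE\|_F^2$ and $\|\bE\bPi_p\|_F^2$. Their expectations are $\sigma^2 p$ and $\sigma^2 n$, which yield the $\sigma^2(1+\lambda)(1/n + 1/p)/2$ terms after dividing by $np$. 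Their fluctuations are sums of squared sub-Gaussians, i.e.\ sub-exponential variables. Bernstein's inequality, or Hanson--Wright, produces the $\sqrt{\log(np)/(np^2)}$ and $\sqrt{\log(np)/(n^2p)}$ terms and the probability $2\exp\{-C\min[\log(np), \sqrt{\min(n,p)\log(np)}]\}$. The two union bounds contribute the $8/(np)$ term.

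\textbf{Main obstacle.} I expect the delicate step to be exactly this mean/projection bookkeeping. The Young absorption must be arranged so that the $\bW$-weighted norm on the left is not over-consumed, while only the crude bound $\bW \preceq (1+\lambda)\bI$ is available on the right, since $\hat\bw$ is data dependent. I would first try to carry all Young steps in the $\|\cdot\|_{\bW}$ geometry and convert to Frobenius norms only at the end, via $\|\bE\|_{\bW}^2 \le (1+\lambda)\|\bE\|_F^2$.
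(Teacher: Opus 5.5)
Your basic inequality matches the paper's. Your row-fusion step is also correct: $\bW$ multiplies each edge row coordinatewise, so $\|[(\bD_\Phi^{+})^\top\bE]_{(ij)\cdot}\bW\|_2\le(1+\lambda)\|[(\bD_\Phi^{+})^\top\bE]_{(ij)\cdot}\|_2$ holds for every $\bW$, and the dependence of $\hat\bw$ on $\bE$ does no harm there.

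\textbf{The gap.} The column piece is not ``treated symmetrically'', because $\bW$ acts on the column index, which is the same index $\bD_\Psi$ acts on. Write $\bI-\bPi_p=\bD_\Psi^\top(\bD_\Psi^{+})^\top$. The noise attached to edge $(k,\ell)$ is then $\bPi_n\bE\bW\bd$ with $\bd=\bD_\Psi^{+}\be_{(k\ell)}$. Its norm is $n^{-1/2}|\sum_m W_{mm}s_m d_m|$, where $s_m=\mathbf 1^\top\bE_{\cdot m}$.
\begin{itemize}
\item No tail bound applies to this scalar, since $\bW$ depends on $\bE$. The best bound uniform over diagonal $0\preceq\bW\preceq(1+\lambda)\bI$ is of order $(1+\lambda)n^{-1/2}\sum_m|s_m||d_m|\approx\sigma(1+\lambda)\|\bd\|_1$.
\item Take two equal cliques joined by a single edge. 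There this exceeds the threshold $\sigma(1+\lambda)\sqrt{n\log(n\|\bPsi\|_0)/\eta(\bPsi)}$ by a factor of order $\sqrt{p/(n\log(np))}$.
\item The mean piece fails for the same reason. We have $\|\bPi_n\bDelta\bPi_p\|_{\bW}^2=\trace(\bW)\,\|\bPi_n\bDelta\mathbf 1\|_2^2/p^2$, which involves columns with $\hat w_\ell=0$. So it is not controlled by $\|\bDelta\|_{\bW}^2$, and Young cannot absorb it.
\item Any absorption would in any case spend part of $\tfrac12\|\bDelta\|^2_{\bW}$. That changes the constants $\tfrac1{2np}$ and $\tfrac{2\gamma}{np}$ asserted in \eqref{eqn:mainthm}.
\end{itemize}
A smaller point: with the full cross term, your union-bound level is $2\sigma(1+\lambda)\sqrt{p\log(p|\mathcal E_{\bPhi}|)/\eta(\bPhi)}$. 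That is twice the hypothesized threshold. The paper gets its constant by giving half of the cross term to a row-graph decomposition and half to a column-graph decomposition.

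\textbf{The missing ingredients} are the simplex mass of the weights and a mean adapted to $\bW$.
\begin{itemize}
\item Since $\trace(\bW)=\sum_\ell\hat w_\ell^2+\lambda\le1+\lambda$, you get $|\sum_m W_{mm}s_md_m|\le(1+\lambda)\|\bs\|_\infty\|\bd\|_\infty$ uniformly in $\bW$. This needs only a union bound over the $p$ column sums. The same trace bound gives $\|\bv\bW\|_2\le(1+\lambda)\|\bv\|_\infty$, which removes the $\sqrt p$ and the factor $2$ from your row step.
\item Replace $\bPi_p$ by $\bq\mathbf 1^\top$ with $\bq=\bW\mathbf 1/\trace(\bW)$. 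Then $\bDelta(\bI-\bq\mathbf 1^\top)=\bDelta(\bI-\bPi_p)(\bI-\bq\mathbf 1^\top)$ is still a function of column differences. Its edge noise $\bPi_n\bE\bW(\bI-\mathbf 1\bq^\top)\bd$ has norm at most $2(1+\lambda)n^{-1/2}\|\bs\|_\infty\|\bd\|_\infty$.
\item The remaining term equals $(\mathbf 1^\top\bE\bW\mathbf 1)(\mathbf 1^\top\bDelta\bW\mathbf 1)/(n\,\trace(\bW))$. The fusion penalty is invariant under $\bV\mapsto\bV+c\mathbf 1\mathbf 1^\top$, so stationarity of $\hat\bU$ gives $\mathbf 1^\top\bDelta\bW\mathbf 1=\mathbf 1^\top\bE\bW\mathbf 1$. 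The term is therefore pure noise, at most $(1+\lambda)\|\bs\|_\infty^2/n$, and nothing has to be absorbed.
\end{itemize}
This plays the role of the paper's identity $\bV_\alpha^M(\hat\balpha^M-\balpha^M)=\bV_\alpha^M(\bV_\alpha^M)^\top\bepsilon$ followed by Hanson--Wright. Do not copy the paper's column-mode step verbatim, though. There, that identity and the bound of $\sum_i(\hat W_{ii}^2+\lambda\hat W_{ii})a_i$ by $(1+\lambda)\sum_i a_i$ (with possibly negative $a_i$) rely on the same interchange of the weights with averaging across columns that breaks your argument.
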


Theorem \ref{thm:main} implies consistency with respect to the learned weighted norm for local solutions of \eqref{eqn:objective}. 
The terms in \eqref{eqn:mainthm} involving $\bPhi$ and $\bPsi$ provide insight into how deterministic choices of affinity graphs affect clustering quality of solutions to \eqref{eqn:objective}. Provided connectivity is maintained, any value where $\Phi_{ij} > 0$ and $\bU_{i\cdot} = \bU_{j \cdot}$, will, at minimum, decrease the right-hand side of \eqref{eqn:mainthm} by increasing the overall algebraic connectivity of the row affinity graph while not increasing the oracle terms. An analogous result holds for $\bPsi$ and column clustering. Furthermore these terms describe how poorly specified affinities, e.g. $\Phi_{ij} > 0$ but $\bU_{i\cdot} \neq \bU_{j \cdot}$, may reduce solution quality. Theorem \ref{thm:main} gives theoretical justification for the empirical finding in the literature that solution quality in convex clustering and biclustering are highly dependent on the quality of affinities. Although we have a notion of a true underlying $\bU$, we do not impose a notion of a ground truth $\bw$. Because Theorem \ref{thm:main} applies to any local solution, any choice of $\hat \bw$ will have a theoretical notion of prediction consistency for its corresponding $\hat \bU$.

\section{Empirical Performance} \label{section:simulations}

To show the efficacy of BCBC, we perform a variety of experiments on data simulated with a bicluster structure and an additional set of features that are uninformative to either clustering mode. We evaluate the fitted biclusters with the true underlying biclusters via the Adjusted Rand Index (ARI), a measure of clustering quality that adjusts for expected performance under random partitions. In addition to testing biclustering performance on the set of true biclusters, we also test the ability of the methods to detect informative features. 
The fitted weights from solutions to BCBC imply a total ordering on the utility of each feature for clustering. Weights may be sparse by explicitly zeroing out uninformative features or be dense, giving a small subset of features a high weight. To unify analysis of these possible fits over many simulations, we use Area under the ROC Curve (AUC) to test the viability of the fitted weights if used in a binary classifier for predicting if a feature is truly informative. An AUC of one indicates a perfect classifier, i.e. all weights for true features are higher than all weights for untrue features. If fitted weights have erroneous zeros for true features or have untrue features with higher weights than true features, the AUC will decrease accordingly. For non-BCBC related methods, we treat feature detection in a binary sense, where any features that are included in any bicluster are deemed informative.

We perform two simulation studies varying the signal-to-noise ratio. The first study evaluates performance in the presence of uninformative features by varying the number of pure noise columns, while holding the number of informative features fixed. The second study uses a fixed number of uninformative features, but varies both the variance of the noise and the size of the informative data matrix to test consistency of the algorithms. The data generating mechanism mirrors the design in \citet{tanSparseBiclusteringTransposable2014} for fair comparisons. We simulate datasets with 25 total biclusters. First, a bicluster center $\mu_{ij}$, is generated from a $\text{Unif}(-10, 10)$ distribution for $i=1,\ldots,5$ and $j=1,\ldots,5$. Then for each entry of $\bX \in \mathbb R^{n\times p}$, a bicluster center is chosen uniformly at random from $\mu$. Afterwards, an additional $p_{\text{extra}}$ columns of 0 are appended to $\bX$, and  independent Gaussian noise with mean 0 and variance $\sigma^2$ is added to all entries of $\bX$. Finally, both the rows and columns are shuffled and the columns are scaled to have unit variance.

All tuning for BCBC related methods is performed as described in Section \ref{section:cv}. We compare performance of our three related algorithms: Adaptive BCBC, optimization of \eqref{eqn:objective_adapt} with Algorithm \ref{alg:bcbc_adapt}; Approximate Adaptive Nearest Neighbor (ANN) BCBC, similar to Adaptive BCBC but using HNSW \citep{malkovEfficientRobustApproximate2018} for faster nearest-neighbor calculations; and BCBC, optimization of \eqref{eqn:objective} with Algorithm \ref{alg:bcbc}. In addition, we compare with a variety of other biclustering methods: BCEL \citep{zhongBiclusteringStructuredRegularized2022}, COBRA \citep{chiConvexBiclustering2017}, SC-Biclust \citep{helgesonBiclusteringSparseClustering2020}, and SparseBC \citep{tanSparseBiclusteringTransposable2014}. All tuning for these methods was done as recommended by their respective papers or software packages. Code for running BCBC, the simulated and real data experiments can be found online\footnote{\url{https://github.com/SamGRosen/BCBC/}}.

\subsection{Varying Uninformative Features}\label{section:simulation1}

For Simulation 1, we run 16 trials for all values of $p_{\text{extra}} \in \{100k \colon k = 0,\ldots,9\}$ with $\sigma = 8, n = 200$, and $p = 200$. Simulation 1 strictly tests robustness to extra uninformative features, and strong performance would correspond to little to no decrease in ARI metrics as the number of useless features increases. Figure \ref{fig:simulation1-bicluster-ari} summarizes the results, showing that performance of several methods deteriorates in this moderate noise regime as $p_{\text{extra}}$ increases. Notably, COBRA has mediocre performance for all values of $p_{\text{extra}}$ while BCBC falls off more slowly. The adaptive versions continue to remain robust to uninformative features.  The strongest competing methods, SparseBC and SC-Biclust, when given the true number of biclusters, have similar performance to tuned Adaptive BCBC. However, when these methods are tuned to find the number of biclusters, according to their recommended methods, Adaptive BCBC performs better, as seen in Figure \ref{fig:simulation1-bicluster-ari}. Overall we see the leading performances in true bicluster ARI is shared between Adaptive or ANN BCBC, and this gap grows larger as $p_{\text{extra}}$ increases. 

When examining the overall ability to identify useful features, we see similarities and differences across the BCBC methods. Figure \ref{fig:simulation1-auc} shows a clear decline in AUC as the balance between informative and uninformative features grows more skewed. The adaptive and ANN versions of BCBC perform similarly, despite computational efficiency gains of the approximate version. Standard BCBC has a much higher variance in its overall performance; this is likely due to the initial $k$-nearest neighbor calculation defining the affinities becoming less meaningful as $p_{\text{extra}}$ increases the number of noisy features. We see that adapting affinities while fitting leads to better performance overall. As a baseline for feature selection, we can consider pre-filtering only the top $200$ features as is recommended in COBRA \citep{chiConvexBiclustering2017}. However, even when using the true number of informative features for preprocessing, performance is worse than that under simultaneous feature selection performed by adaptive BCBC, and even the non-adaptive version of BCBC has a comparable median performance in very high noise regimes. Both SC-Biclust and SparseBC select true features at a similar rate to the adaptive BCBC methods at a low $p_{\text{extra}}$, but they begin to falter noticeably as $p_{\text{extra}}$ increases.

\begin{figure}[!ht]
    \centering
    \includegraphics[width=0.8\linewidth]{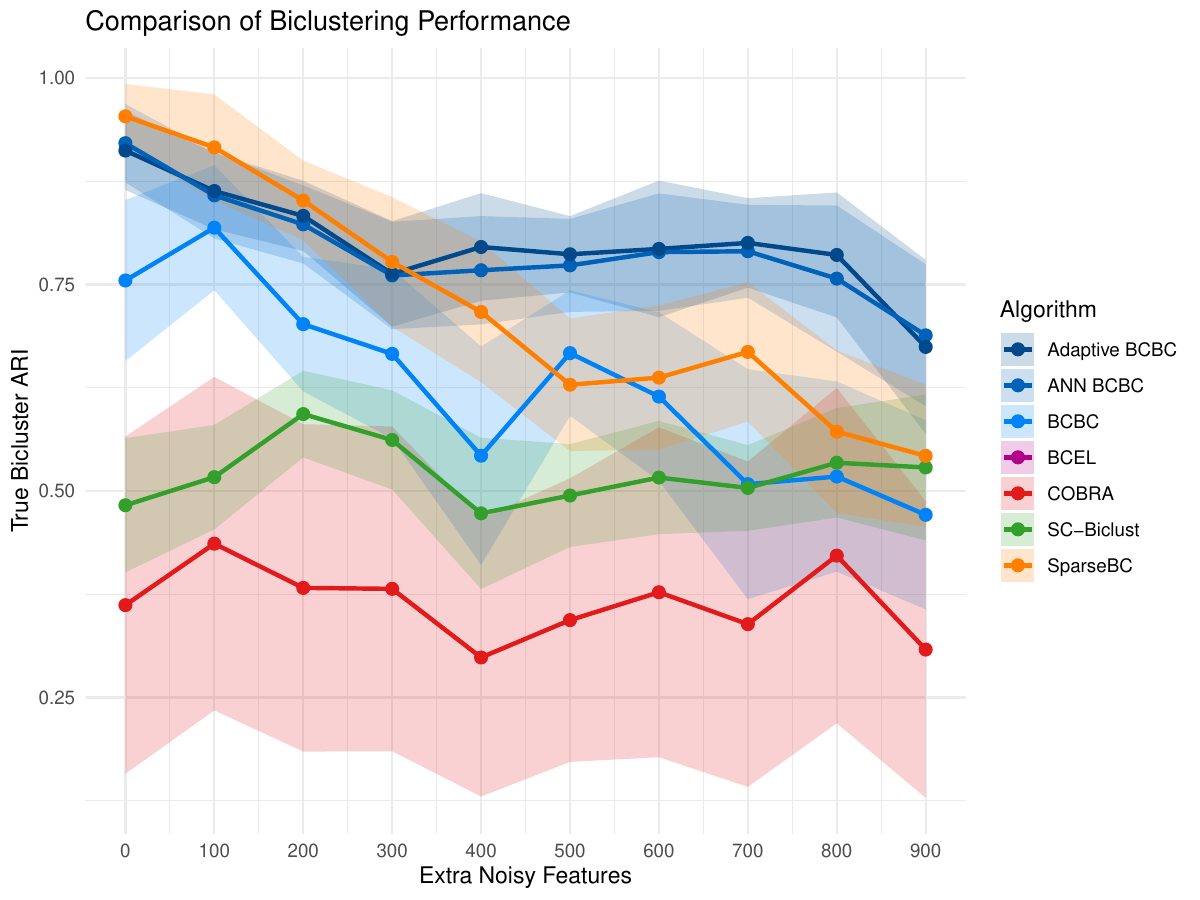}
    
    \caption{Mean of true bicluster ARI for each algorithm under the scenario described in Section \ref{section:simulation1}. The standard error is used to calculate 95\% confidence bands for each point.}
    \label{fig:simulation1-bicluster-ari}
\end{figure}

\begin{figure}[!ht]
    \centering
    \includegraphics[width=0.90\linewidth]{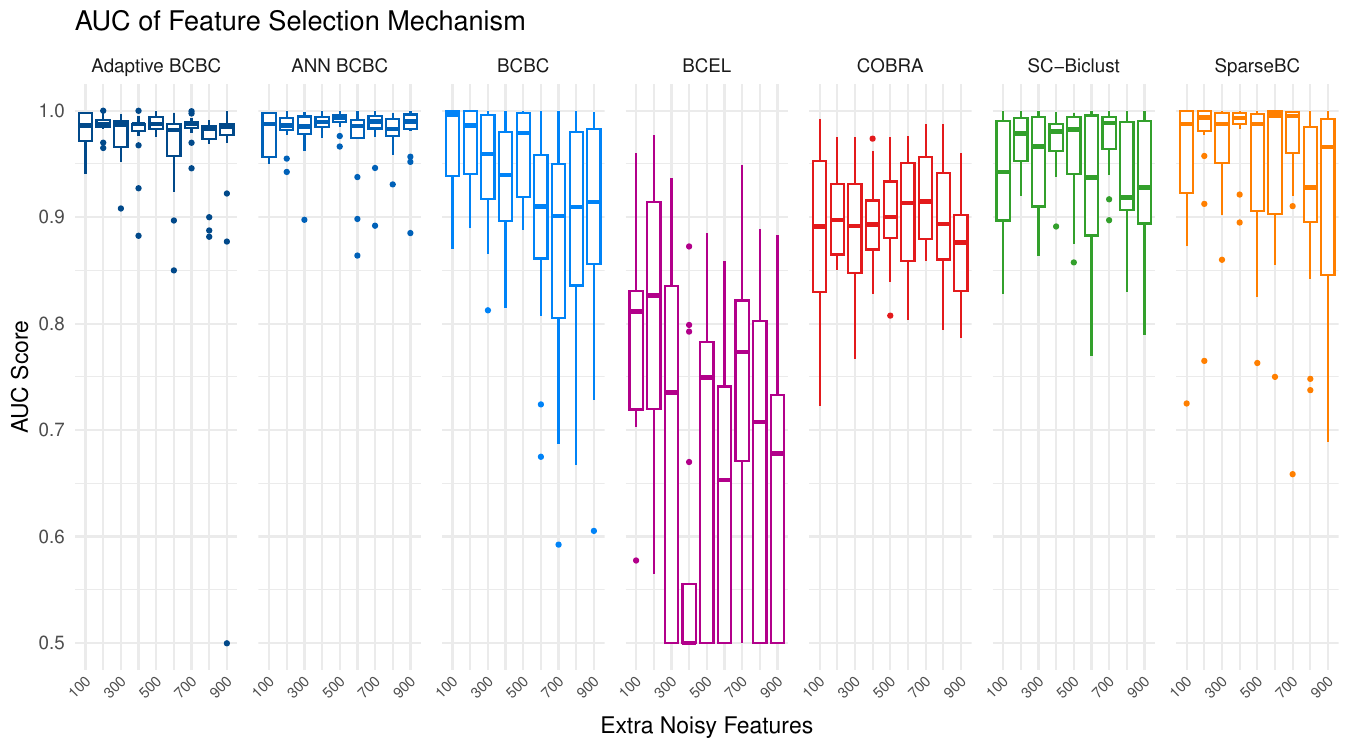}
    \caption{Distribution of AUC scores when fit under the scenario described in Section \ref{section:simulation1}.}
    \label{fig:simulation1-auc}
\end{figure}

\subsection{Varying Noise Magnitude and Dimension}\label{section:simulation2}

For our second simulation study we perform 16 trials for each value of $\sigma \in \{5, 7.5, 10, 12.5, 15\}$ and $(n_i, p_i) = (50 + 25 i, 50 + 25 i)$ for $i = 0,\ldots, 4$. Each trial generates $\bX \in \mathbb R^{n_i \times (p_i + 300)}$ which is then fit for biclusters. Every trial has an extra 300 uninformative features, and the variance of the noise is strictly greater than the variance of the bicluster centers when $\sigma > 10/\sqrt{3} \approx 5.77$. This is a very high noise regime, and we expect performance to fall off considerably.

Here, we focus our comparison of BCBC against the best performing peers, SparseBC and SC-Biclust. We use their respective tuning methods to select up to 7 possible row and column clusters, and set $k=5, \tau = 1$ for both the row and column GKNN graphs in BCBC. Figure \ref{fig:simulation2-bicluster-ari} shows that adaptive BCBC methods perform best in the majority of cases as they add robustness to otherwise poorly initialized affinities. Figure \ref{fig:simulation2-auc} shows the feature selection performance of adaptive BCBC methods is similar or better than the competing methods of SparseBC and SC-Biclust. This gap in performance is higher in the more difficult cases with a small number of informative rows and columns.

\begin{figure}[!ht]
    \centering
    \includegraphics[width=0.9\linewidth]{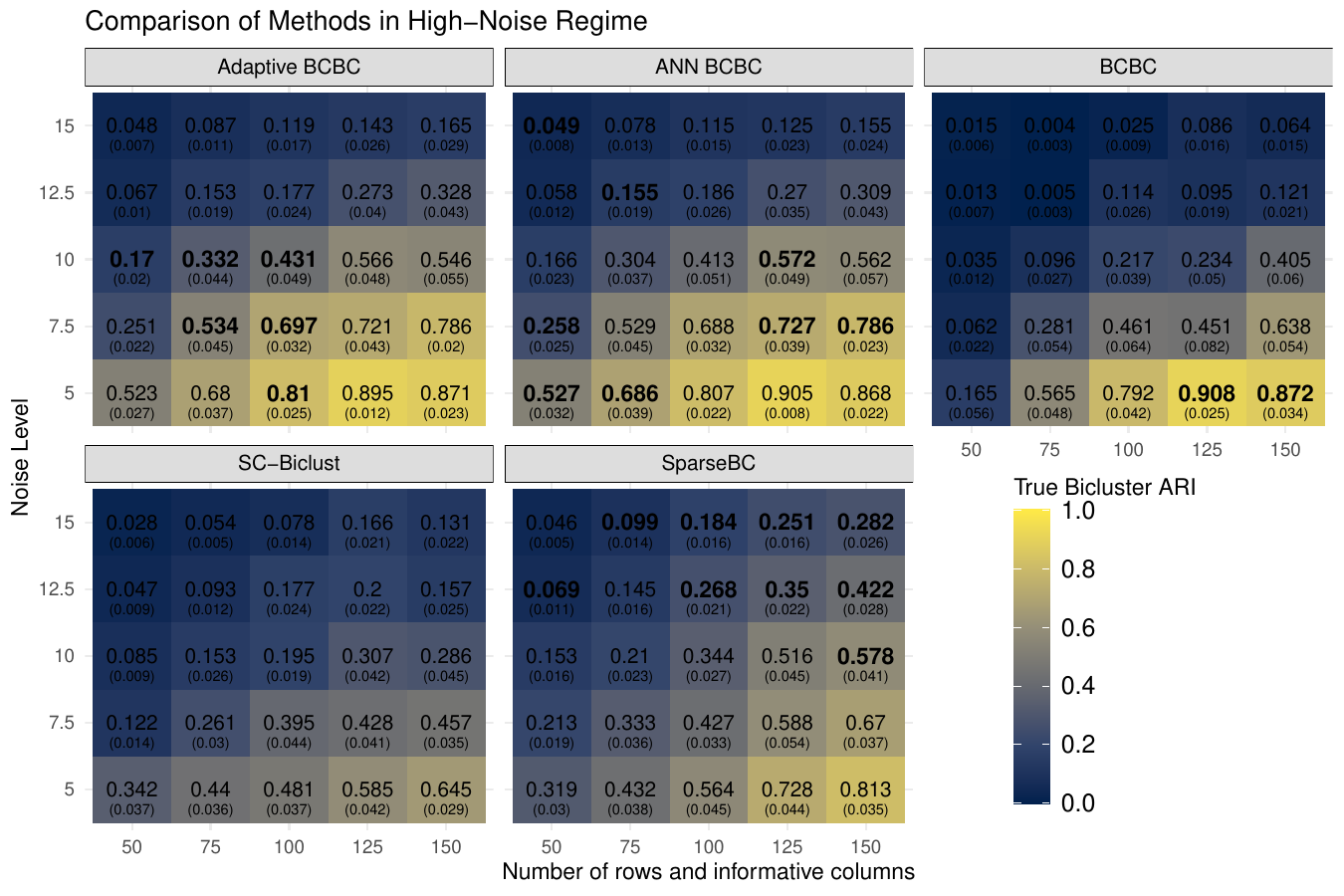}
    \caption{Results for simulations described in Section \ref{section:simulation2}: Mean true bicluster ARI. Metrics in bold are the best performance across all algorithms for the given noise variance and matrix size. The standard error is displayed below each mean in parenthesis.}
    \label{fig:simulation2-bicluster-ari}
\end{figure}

\begin{figure}[!ht]
    \centering
    \includegraphics[width=0.9\linewidth]{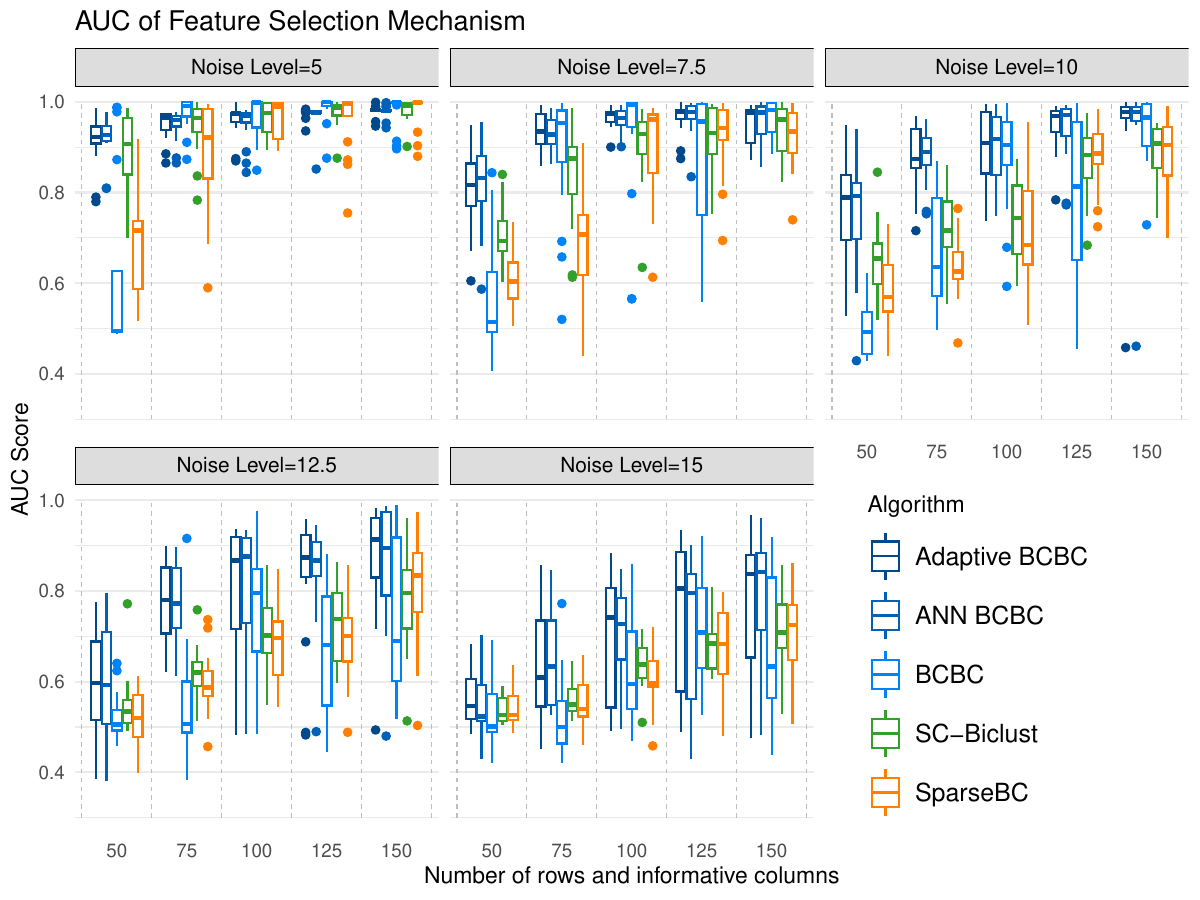}
    \caption{Distribution of AUC scores for models under the scenario described in Section \ref{section:simulation2}.}
    \label{fig:simulation2-auc}
\end{figure}

\subsection{Genomics Case Study} \label{section:real_data}

A gene microarray dataset of lymphoma samples from \citet{alizadehDistinctTypesDiffuse2000} is used as a case study with our method. In particular, we use the preprocessed array data of the log-ratios of hybridization of fluorescent probes from cDNA clones of mRNA samples compared to a reference mRNA sample. We restrict our analysis to the cDNA clones from samples of diffuse large B-cell lymphoma (DLBCL), follicular lymphoma (FL) and chronic lymphocytic leukaemia (CLL). Altogether this gives a data matrix with 62 rows (cDNA clones) and 4026 columns (mRNA samples). About 5 percent of the entries are missing due to invalid measurements.

To discover biclusters, we apply the adaptive version of BCBC with a GKNN graph using 10 and 25 nearest neighbors for the rows and columns, respectively, with $\tau = 1$. To impute the missing data, we optimize the hold-out problem \eqref{eqn:missing_objective} with $\gamma = 750$ and $\lambda = 0$. After imputation, we tune for $\lambda$ using the methods in Section \ref{section:cv} and a fixed $\gamma = 750$. Following convergence and tuning, we see a fit of five row clusters and 43 column clusters with dense weights. Some of the possible fits from tuning had sparser weights, but their eBIC was marginally larger and hence were not chosen.

Several insights can be drawn from the fit, shown in Figure \ref{fig:lymphoma}. First, many biclusters restricted to cDNA clones corresponding to CLL have opposite sign of many corresponding biclusters for DBLCL. The respective biclusters for FL seem to be in the middle, with a smaller magnitude of fluorescent probe ratios. In this case study, the learned weights from the BCBC fit are dense, so rather than selecting among possible features, they allow us to interpret them based on their importance in discriminating row clusters. Table \ref{tab:biclusters} highlights properties of the top eight column clusters by mean feature weight. 

We examine the label of the highest weight mRNA sample in each of the top eight column clusters, giving eight columns which are sufficiently different, yet can classify the row clusters in a similar manner. We find that many of the mRNA samples identified in Table \ref{tab:biclusters} are common in the oncology literature. For example, Ki-67 is a cornerstone in diagnosis of cancer \citep{liKi67PromisingMolecular2015}, Cathepsin B is associated with metastatis on lymphoma \citep{czyzewskaExpressionMatrixMetalloproteinase2008} and Beta-actin is used as a measurement reference for a wide range of cancers \citep{guoACTBCancer2013}. Particularly interesting is the mRNA sample p55CDC (member of cluster A) which has a known association with cell death in the biological literature \citep{linAnalysisCellDeath1998}. Despite being one of the mRNA samples with the most missing values, p55CDC is the second largest mRNA sample by weight in the fit, showing that our method can successfully learn its role in identifying lymphoma despite its difficulty to measure. Examining the fit further could show mRNA samples relevant to lymphoma that are not known in the literature.

\begin{longtable}[]{@{}llllp{7cm}@{}}
    \caption{Top eight column clusters from fit according to mean weight. The weights in the table are normalized such that a weight of 1 is equal to the inverse of $p=4026$. mRNA sample names are found in the dataset of \citet{alizadehDistinctTypesDiffuse2000}.}
    \label{tab:biclusters}\tabularnewline
    \toprule\noalign{}
    Label & Size & Mean Weight & Max Weight & Highest Weight mRNA Sample \\
    \midrule\noalign{}
    \endfirsthead
    \toprule\noalign{}
    Label & Size & Mean Weight & Max Weight & Highest Weight mRNA Sample \\
    \midrule\noalign{} 
    \endhead
    \bottomrule\noalign{}
    \endlastfoot
    A & 276 & 1.48 & 2.82 &  Ki-67 \\
    B & 156 & 1.26 & 2.11 & NM23-H1 (NDP kinase A) \\
    C & 319 & 1.23 & 2.05 & Cathepsin B \\
    D &  81 & 1.23 & 2.04 & Beta-actin \\
    E & 111 & 1.20 & 1.81 & Ras GTPase-activating protein \\
    F & 437 & 1.17 & 1.98 & MEK Kinase 1 (MEKK1) \\
    G &  54 & 1.14 & 1.72 & Aryl acetyltransferase \\
    H &  71 & 1.10 & 1.71 & Thioredoxin reductase
    \end{longtable}

\begin{figure}[!ht]
    \centering
    \includegraphics[width=0.9\linewidth]{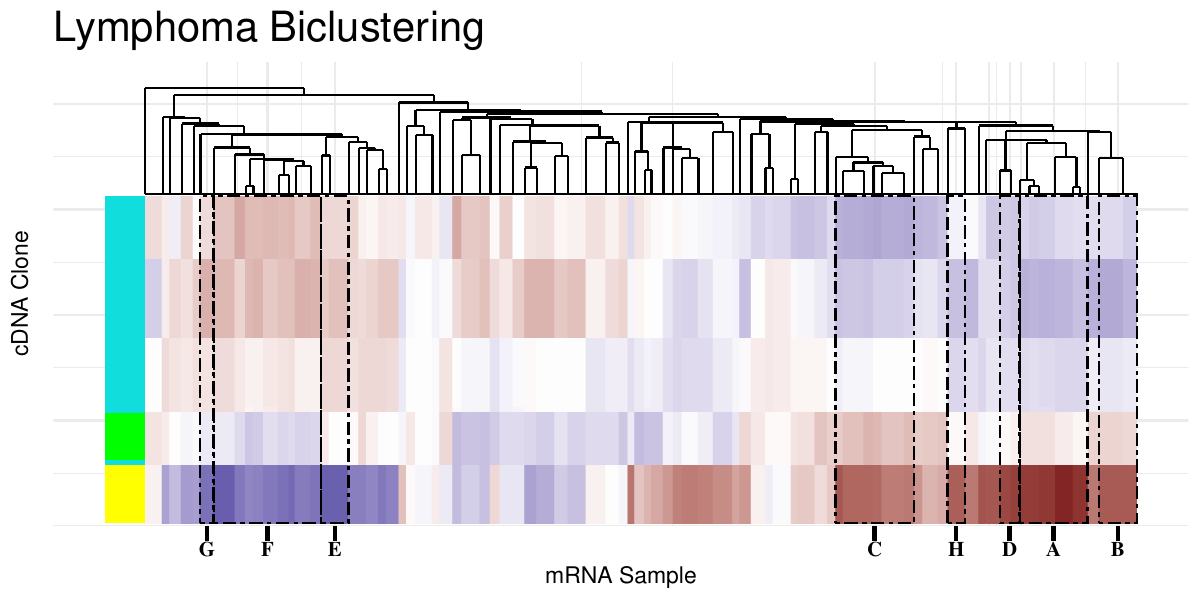}
    \caption{Biclustering results of Lymphoma data with top eight column clusters by mean weight outlined. The cDNA clone true labels correspond to DLBCL, FL, and CLL for colors cyan, green, and yellow, respectively. Blocks are labeled to be matched with Table \ref{tab:biclusters}.}
    \label{fig:lymphoma}
\end{figure}

\section{Discussion}
Through careful theoretical and empirical analyses, we find that a biconvex formulation addresses persistent challenges to biclustering in high-dimensional settings. We find that by augmenting the convex biclustering problem with a feature-weighting mechanism that is learned jointly with the clustering tasks, the method benefits from some of the stability properties of convex formulations, while lending a newly principled approach to feature weighing and selection as opposed to previously use heuristics. Indeed, despite biconvexity formally falling into the class of non-convex problems, the method nonetheless enjoys  favorable convergence guarantees as well as finite-sample guarantees for any local minimizer of the objective. These findings shed new light on the role of connectivity in the underlying affinities, which more closely aligns with what practitioners have observed in practice, and our experiments demonstrate that adaptive affinities are particularly valuable when the number of uninformative features is large. The theoretical devices may be extended to yield a better understanding of how affinity updates influence convergence behavior in future work. Because we have demonstrated how our framework successfully extends past work on biconvex formulations of clustering to the two-way task of biclustering, it is also natural to consider generalizations to tensor-valued data.

\bibliographystyle{agsm}
\bibliography{bibliography.bib}

\include{supplement}

\end{document}

%% file: supplement.tex
\appendix

\section{BCBC for Missing Data}\label{appendix:hold-out}
Recall our main objective can be written as
\begin{equation}
    \begin{split}
    F(\bU, \bw) & = \gamma \brk3{\sum_{i < j}^n \sqrt{\Phi_{ij}}\|\bU_{i\cdot}-\bU_{j\cdot}\|_2 + \sum_{k < \ell}^p \sqrt{\Psi_{k \ell} }\|\bU_{\cdot k}-\bU_{\cdot \ell}\|_2 } + \frac{1}{2} \|\bX - \bU\|^2_{\bw^2 + \lambda \bw}, \label{eqn:appendix:objective} \\
    \text{subject to } & \ \sum_{\ell = 1}^p w_\ell = 1, \quad w_\ell \geq 0.
    \end{split}
\end{equation}
Consider an observed matrix $\bX$ such that if $(i, j) \in \mathcal I$, then $X_{ij}$ is observed, otherwise it is missing. Let $\mathcal P_{\mathcal I}(\bX)$ be the projection of a matrix $\bX$ onto the indices $\mathcal I$, i.e. 
\begin{equation*}
    \mathcal P_{\mathcal I}(\bX)_{ij} = \begin{cases}
    X_{ij} & (i, j) \in \mathcal I, \\ 0 & \text{otherwise}.
\end{cases}
\end{equation*}
An analogous missing data objective to \eqref{eqn:appendix:objective} is
\begin{equation*}
    \begin{split}
        \tilde F_{\gamma, \lambda}(\bU, \bw) & := \gamma \brk3{\sum_{i < j}^n \sqrt{\Phi_{ij}}\|\bU_{i\cdot}-\bU_{j\cdot}\|_2 + \sum_{k < \ell}^p \sqrt{\Psi_{k \ell}} \|\bU_{\cdot k}-\bU_{\cdot \ell}\|_2 } \\ & \quad + \frac{1}{2}\sum_{\ell=1}^p (w_\ell^2 + \lambda w_\ell) \sum_{i = 1}^n \mathcal P_{\mathcal I}(\bU - \bX)_{i\ell}^2, \\
        \text{subject to } & \ \sum_{\ell = 1}^p w_\ell = 1, \quad w_\ell \geq 0.
    \end{split}
\end{equation*}

For brevity we write
\begin{displaymath}
    \tilde F_{\gamma, \lambda}(\bU, \bw) := \gamma f(\bU) + g(\bw) + \frac{1}{2} \| \mathcal P_{\mathcal I}(\bX) - \mathcal P_{\mathcal I}(\bU)\|^2_{\bw^2 + \lambda \bw},
\end{displaymath}
where $f$ is the fusion terms and $g$ is the constraint to the simplex for $\bw$. We can use majorization-minimization to optimize the above similar to the results in \citet{chiConvexBiclustering2017},
\begin{align}
    \tilde F_{\gamma, \lambda}(\bU, \bw) & = \gamma f(\bU) + g(\bw) + \frac{1}{2} \| \mathcal P_{\mathcal I}(\bX) - \mathcal P_{\mathcal I}(\bU)\|^2_{\bw^2 + \lambda \bw} \nonumber \\
    & \leq \gamma f(\bU) + g(\bw) + \frac{1}{2} \| \mathcal P_{\mathcal I}(\bX) - \mathcal P_{\mathcal I}(\bU)\|^2_{\bw^2 + \lambda \bw} + \frac{1}{2}\sum_{(i, j) \in \mathcal I^c} (w_j^2 + \lambda w_j)(\tilde U_{ij} - U_{ij})^2 \nonumber \\
    & = \gamma f(\bU) + g(\bw) + \frac{1}{2} \sum_{i, j} (w_j^2 + \lambda w_j)(M_{ij} - U_{ij})^2, \nonumber \\
    & =: G(\bU, \bw \mid \tilde \bU), \label{eqn:mm_obj}
\end{align}
where $\bM = \mathcal P_{\mathcal I}(\bX) + \mathcal P_{\mathcal I^c} (\tilde \bU)$. 
We can optimize $G$ by running the BCBC routine which will give estimates for the missing data entries (see Algorithm \ref{alg:missing}).

\begin{algorithm}[!h]
\caption{BCBC for Missing Data} 
\label{alg:missing}
\hspace*{0.00in} {\bf Input:} Data points $\bX$, $\mathcal I$ set of available data indices
\begin{algorithmic}[1]
    \State Initialize $k = 0$, $\bU^0_{\mathcal I} = \bX_{\mathcal I}, \bU^0_{\mathcal I^c} = \operatorname{mean}(\bX_{\mathcal I})$, $\bw^0 = \mathbf 1_p/p$.
    \While{not converged}
        \State $\bU^{k+1}, \bw^{k+1} \leftarrow \argmin_{\bU, \bw} G(\bU, \bw \mid \bU^k)$ from \eqref{eqn:mm_obj} by applying BCBC.
        \State $k \leftarrow k + 1$.
    \EndWhile
    \State \Return $\bU^k, \bw^k$.
\end{algorithmic}
\end{algorithm}

\section{\KL Property of the Objective} \label{appendix:kl}

Theorem 1 of \citet{bolteProximalAlternatingLinearized2014} shows limiting points of PALM are critical points provided the optimized function satisfies the \KL (KL) property. We now reference several definitions and facts from \citet{bolteProximalAlternatingLinearized2014} and \citet{attouchProximalAlternatingMinimization2010} describing this property.

\begin{definition}[Desingularizing Function]
    Let $\Psi_\eta$ be the set of all functions $\psi\colon [0, \eta) \mapsto \mathbb R_+$ where
    \begin{enumerate}
        \tightlist
        \item $\psi$ is concave and continuous,
        \item $\psi(0) = 0$,
        \item $\psi$ is $C^1$ on $(0, \eta)$ and continuous at 0,
        \item $\psi'(s) > 0$ for all $s \in (0, \eta).$
    \end{enumerate}
    Elements in $\Psi_\eta$ are called desingularizing functions.
\end{definition}

\begin{definition}[KL property and KL function]
    Let $f\colon \mathbb R^d \mapsto (-\infty, \infty]$ be proper and lower semicontinuous. $f$ has the \KL property at $\bar \bu \in \dom \partial f := \{\bu \in \mathbb R^d \mid \partial f(\bu) \neq \emptyset \} $ if there exist $\eta \in (0, \infty], \epsilon > 0$ and $\psi \in \Psi_\eta$, such that
    \begin{displaymath}
        \psi'[f(\bu) - f(\bar \bu)] \geq \inf \brk[c]{ \|\bv\|_2 \colon \bv \in \partial f(\bu)},
    \end{displaymath}
    whenever $\|\bu - \bar \bu\|_2 \leq \epsilon$ and $f(\bar \bu) < f(\bu) < f(\bar \bu) + \eta$. If $f$ satisfies this property at every point in $\dom \partial f$, then $f$ is called a KL function.
\end{definition}

\begin{definition}[Semialgebraic sets and functions]
    A set is semialgebraic if it can be written as a finite union of sets of the form 
    \begin{displaymath}
        \brk[c]{ \bx \in \mathbb R^d \mid p_i(\bx) = 0, q_i(\bx) < 0, i = 1, \ldots, n},
    \end{displaymath}
    where $p_i$ and $q_i$ are real polynomial functions. A function $f$ is semialgebraic if its graph $\brk[c]{ (\bx, y) \mid f(\bx) = y}$ is semialgebraic.
\end{definition}

We use two useful properties of semialgebraic functions immediately:

\begin{enumerate}
    \tightlist
    \item Finite sums of semialgebraic functions are semialgebraic.
    \item Indicator functions of semialgebraic sets are semialgebraic.
\end{enumerate}

\begin{proposition}
    $F(\bU, \bw)$ is semialgebraic and therefore a KL function.
\end{proposition}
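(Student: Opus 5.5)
We decompose $F$ according to \eqref{eqn:objective_short}, $F = f + g + H$, show that each of the three pieces is semialgebraic, and then appeal to the two closure facts stated above: finite sums of semialgebraic functions are semialgebraic, and indicators of semialgebraic sets are semialgebraic. The KL conclusion then follows from the standard result, recorded in \citet{bolteProximalAlternatingLinearized2014} (Theorem 3, via \citet{attouchProximalAlternatingMinimization2010}), that every proper lower semicontinuous semialgebraic function has the KL property at every point of $\dom \partial F$. In that result the desingularizing function can be taken as $\psi(s) = c s^{1-\theta}$ with $\theta \in [0,1)$.

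\textbf{The smooth term $H$.} The map $(\bU,\bw) \mapsto \frac12 \sum_\ell (w_\ell^2 + \lambda w_\ell) \sum_i (X_{i\ell} - U_{i\ell})^2$ is a real polynomial in the entries of $(\bU,\bw)$, since $\bX$ and $\lambda$ are fixed constants. Its graph is therefore the zero set of the single polynomial $y - H(\bU,\bw)$, which is semialgebraic.

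\textbf{The constraint term $g$.} The feasible set $\{\bw : \sum_\ell w_\ell = 1,\ w_\ell \ge 0\}$ is defined by one polynomial equality together with finitely many conditions $w_\ell \ge 0$. Each such condition is the union $\{-w_\ell < 0\} \cup \{w_\ell = 0\}$, so the set is a finite union of basic semialgebraic sets. Hence $g$, its $0/\infty$ indicator, is semialgebraic.

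\textbf{The fusion term $f$.} This is the only step with any content. Each summand has the form $c\|\bv\|_2$, where $c = \gamma\sqrt{\Phi_{ij}} \ge 0$ is a fixed constant and $\bv$ is a linear function of $\bU$. Its graph is
\[
\{(\bU, y) : y^2 = c^2 \|\bv(\bU)\|_2^2,\ y \ge 0\},
\]
which is given by one polynomial equation together with one sign condition, and is therefore semialgebraic. The square roots $\sqrt{\Phi_{ij}}$ and $\sqrt{\Psi_{k\ell}}$ cause no difficulty because they are constants, not functions of the variables. A finite sum of such summands is semialgebraic, so $f$ is.

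Finally, $F$ is proper and lower semicontinuous: $f$ and $H$ are continuous, and $g$ is the indicator of a closed set. These are exactly the hypotheses of the KL theorem for semialgebraic functions, so $F$ is a KL function.

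One caveat concerns the sum-closure fact. In the paper's convention $g$ takes the value $+\infty$, so the closure must be read for extended-valued functions. We handle this by viewing $F$ as the restriction of the real-valued semialgebraic function $f + H$ to the semialgebraic domain $\{\bw : \sum_\ell w_\ell = 1,\ w_\ell \ge 0\}$. Its graph is then the intersection of the graph of $f + H$ with a semialgebraic set, which is semialgebraic by closure under intersection.
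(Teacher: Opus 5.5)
Your proposal is correct and follows the same route as the paper. You split $F = f + g + H$, show that $H$ is polynomial, that $g$ is the indicator of a finite union of basic semialgebraic sets, and that $f$ is a finite sum of scaled Euclidean norms of linear maps, then invoke the theorem that proper lower semicontinuous semialgebraic functions are KL. Two of your steps are small refinements rather than a different argument: you describe the graph of $c\|\bv(\bU)\|_2$ directly instead of citing Example 4 of \citet{bolteProximalAlternatingLinearized2014}, and you check properness, lower semicontinuity and the extended-valued sum explicitly where the paper leaves them implicit.
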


\begin{proof}
    Recall that our objective \eqref{eqn:appendix:objective} can be written as
    \begin{displaymath}
        F(\bU, \bw) = f(\bU) + g(\bw) + H(\bU, \bw),
    \end{displaymath}
    where $f$ is the fusion terms, $g$ is the indicator for the feasibility of the weights, and $H$ is the weighted loss.
    
    Example 4 of \citet{bolteProximalAlternatingLinearized2014} shows that the $\|\cdot\|_q$ norm is semialgebraic when $q$ is rational. As a result, $f(\bU)$ is semialgebraic via the first property above.    $g(\bw)$ is semialgebraic via the second property. For completeness we show the set $\mathcal W = \brk[c]{\bw \in \mathbb R^p \mid \sum_{\ell=1}^p w_\ell = 1, w_\ell \geq 0}$ is semialgebraic via construction.
    Let $[p] = \brk[c]{1,\ldots, p}$ and $2^{[p]}$ be the resulting power set. We then have the finite union
    \begin{displaymath}
        \mathcal W = \bigcup_{\mathcal I \in 2^{[p]} \setminus \{\emptyset\}} \brk[c]3{\bw \mid \sum_{\ell \in \mathcal I} w_\ell = 1, w_\ell > 0\ \forall \ell \in \mathcal I, w_{\ell'} = 0\ \forall \ell' \in [p] \setminus \mathcal I}.
    \end{displaymath}

    Finally, consider the weighted loss, $H(\bU, \bw)$. Expanding all terms we have
    \begin{displaymath}
        H(\bU, \bw) = \sum_{\ell = 1}^p \sum_{i = 1}^n (w_\ell^2 + \lambda w_\ell) (U_{i\ell} - X_{i\ell})^2.
    \end{displaymath}
    $H$ is a polynomial function with a finite number of terms, so it is semialgebraic by definition. As a result of $f, g$, and $H$ being semialgebraic, the complete objective function $F$ is too. Thus, via Theorem 3.1 of \citet{bolteLojasiewiczInequalityNonsmooth2007a}, $F$ is a KL function and satisfies the necessary assumptions for PALM to converge to a critical point.
\end{proof}

\section{Results for Connected Affinity Graphs}\label{appendix:affinities}

\begin{lemma}[Affinity Graph Topology] \label{lemma:affinity-graph-structure}
    Let $L$ be equal to the number of nonzero entries in the upper triangle of $\bPhi \in \mathbb R^{n\times n}$. If the weighted graph, $G$, formed by the adjacency matrix $\bPhi$ is connected, then there exists a matrix $\bD$ such that

    \begin{enumerate}[label=(\roman*)]
        \tightlist
        \item $\bD \in \mathbb R^{{Lp} \times np} $, \label{lemma:fgs1}
        \item $\Phi_{ij} > 0, i < j \implies \exists \mathcal R(i, j) \text{ such that } \bD_{\mathcal R(i, j)} \vect(\bU) = \sqrt{\Phi_{ij}}(\bU_{i\cdot} - \bU_{j\cdot}) $, \label{lemma:fgs2}
        \item $\rank(\bD) = p(n-1)$, \label{lemma:fgs3}
        \item The minimum nonzero singular value of $\bD$ is $\sqrt{a(\bPhi)}$, where $a(\bPhi)$ is the algebraic connectivity of $G$, \label{lemma:fgs4}
        \item If $\bD = \bA \bLambda \bV^\top$ is the singular value decomposition of $\bD$ then the maximum singular value of $(\bA \bLambda)^\dagger$ is $\frac{1}{\sqrt{a(\bPhi)}}$ and $(\bA \bLambda)^\dagger (\bA \bLambda) = \bI_{p(n-1)}$, \label{lemma:fgs5}
        \item Let $\bV = [\bV_\beta, \bV_\alpha]$ be the right singular vectors of $\bD$, where $\bV_\beta$ and $\bV_\alpha$ correspond to the orthonormal basis of the column space and null space of $\bD$, respectively. Then $\bV_\alpha \bV_\alpha^\top = \bI_p \otimes \frac{\mathbf 1_n \mathbf 1_n^\top}{n}.$ \label{lemma:fgs6}
    \end{enumerate}
\end{lemma}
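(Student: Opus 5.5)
The plan is to build $\bD$ explicitly from a weighted incidence matrix of $G$ and then read every claim off the spectral theory of the weighted graph Laplacian. Enumerate the $L$ edges $(i,j)$ with $i<j$ and $\Phi_{ij}>0$. Define the weighted incidence matrix $\bB \in \mathbb R^{L\times n}$ whose row for edge $(i,j)$ is $\sqrt{\Phi_{ij}}(\be_i - \be_j)^\top$. Set $\bD = \bI_p \otimes \bB$ after reordering, or more conveniently $\bD = \bB \otimes \bI_p$ acting on the row-stacked vectorization. The choice between the two just fixes the convention for $\vect$. I will pick the ordering so that the block of $p$ rows $\mathcal R(i,j)$ associated with edge $(i,j)$ yields $\sqrt{\Phi_{ij}}(\bU_{i\cdot}-\bU_{j\cdot})$. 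This gives (i) and (ii) directly, since $\bD$ has $Lp$ rows and $np$ columns.

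For (iii) and (iv), compute $\bD^\top\bD = (\bB^\top\bB)\otimes \bI_p$ (up to the same permutation). Here $\bB^\top \bB = \bL_G$ is the weighted Laplacian $\sum_{i<j}\Phi_{ij}(\be_i-\be_j)(\be_i-\be_j)^\top$. The eigenvalues of a Kronecker product with $\bI_p$ are those of $\bL_G$, each repeated $p$ times. Since $G$ is connected, $\bL_G$ has a one-dimensional kernel spanned by $\mathbf 1_n$: the quadratic form $\sum\Phi_{ij}(x_i-x_j)^2$ vanishes only when $x$ is constant along the edges, hence constant overall. So $\rank(\bL_G)=n-1$, giving $\rank(\bD)=p(n-1)$. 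The smallest nonzero eigenvalue of $\bD^\top\bD$ is $a(\bPhi)$, so the smallest nonzero singular value of $\bD$ is $\sqrt{a(\bPhi)}$.

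For (v), use the compact SVD: take $\bA\in\mathbb R^{Lp\times p(n-1)}$ with orthonormal columns and $\bLambda$ diagonal and positive of size $p(n-1)$. Then $\bA\bLambda$ has full column rank, so $(\bA\bLambda)^\dagger = \bLambda^{-1}\bA^\top$. Its product with $\bA\bLambda$ is the identity, and its singular values are the reciprocals $1/\lambda_k$. The maximum is $1/\sqrt{a(\bPhi)}$ by (iv).

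For (vi), the null space of $\bD$ is the kernel of $\bL_G\otimes\bI_p$, which is $\mathrm{span}(\mathbf 1_n)\otimes\mathbb R^p$, of dimension $p$ as (iii) requires. The orthogonal projector onto it equals $\bV_\alpha\bV_\alpha^\top$ regardless of the choice of basis, and that projector is $\frac{\mathbf 1_n\mathbf 1_n^\top}{n}\otimes\bI_p$. Under the column-stacking convention for $\vect$ it is written $\bI_p\otimes\frac{\mathbf 1_n\mathbf 1_n^\top}{n}$, matching the statement. The real obstacle is bookkeeping rather than any single hard step. The ordering convention of $\vect$ must be fixed consistently so that (ii) and (vi) hold in the same Kronecker orientation. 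With column stacking, $\bD=\bI_p\otimes\bB$ gives $\bD^\top\bD=\bI_p\otimes\bL_G$ and the projector $\bI_p\otimes \mathbf 1\mathbf 1^\top/n$. In that case the rows $\mathcal R(i,j)$ are the $p$ indices $\{(k-1)L+e\}_{k=1}^p$ for edge index $e$, so I will define $\mathcal R(i,j)$ that way.
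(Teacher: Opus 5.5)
Your proposal is correct and follows essentially the same route as the paper: the paper's $\bD_1 = [\bB E(\bPhi)]^\top$ is exactly your weighted incidence matrix, and the paper sets $\bD = \bI_p \otimes \bD_1$ under column stacking, reading (iii)--(vi) off the Laplacian $\bD_1^\top \bD_1$ and the compact SVD just as you do. The only difference is that you derive $\rank = n-1$ directly from the kernel of the Laplacian's quadratic form on a connected graph, whereas the paper cites the rank theorem for incidence matrices of connected graphs (Theorem 9.6 of Deo).
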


\begin{proof}
    Because $G$ is connected, $L\geq n-1$, since a graph of $n$ vertices must have at least $n-1$ edges to be connected. Furthermore, if $G$ becomes a directed graph in any orientation, then $G$ is weakly connected by definition. Let $\bB \in \{-1, 0, 1\}^{n \times L}$ be an unweighted oriented incidence matrix where any directed edge between nodes $(i, j)$ with $i < j$ starts from node $i$ and ends at node $j$. Via Theorem 9.6 of \citet{deoGraphTheoryApplications1974}, $\bB$ has rank $n-1$ since $G$ is connected. Let $\bD_1 = [\bB E(\bPhi)]^\top$ where $E(\bPhi) \in \mathbb R^{L \times L}$ is a diagonal matrix that multiplies each column of $\bB$ by the corresponding (nonzero) square root of the edge weight. Because $E(\bPhi)$ has full rank and $\bB$ is rank $(n-1)$, $\bD_1$ is also rank $n-1$. To be more explicit, if $\Phi_{ij} > 0$ and $i < j$ then there exists a \textit{row} in $\bD_1$ where $\sqrt{\Phi_{ij}}$ is in the $i$th element and $-\sqrt{\Phi_{ij}}$ is in the $j$th element. As a result, $\sqrt{\Phi_{ij}}( \bU_{i\cdot} - \bU_{j\cdot} )$ is a column in $\bD_1 \bU$.

    Let $\bD = \bI_p \otimes \bD_1$. Then we have via the Kronecker matrix-vector product, $\bD \vect(\bU) = \vect(\bD_1 \bU)$ which implies there exists some indices $\mathcal R(i,j)$ where $\bD_{\mathcal R(i, j)} \vect(\bU) = \sqrt{\Phi_{ij}}(\bU_{i\cdot} - \bU_{j\cdot})$. By construction $\bD$ satisfies \ref{lemma:fgs1} and \ref{lemma:fgs2}. Furthermore, $\rank(\bD) = \rank(\bD_1)\rank(\bI_p) = p(n-1)$ satisfying \ref{lemma:fgs3}. In addition, $\bD$ has the singular values of $\bD_1$, albeit with a higher multiplicity. We can write the Laplacian of the graph $G$ as $\bQ = \bD_1^\top \bD_1$. $Q_{ij} = -\Phi_{ij}$ if $i$ and $j$ share an edge of weight $\Phi_{ij}$ and $Q_{ii} = \sum_{j} \Phi_{ij}$. For \ref{lemma:fgs4}, the smallest non-zero eigenvalue of this generalized Laplacian is known as the algebraic connectivity, which is equal to the square of the smallest singular value of $\bD_1$ and hence $\bD$ \citep{deabreuOldNewResults2007}.

    Let $\bD = \bA \bLambda \bV^\top$ be the singular value decomposition of $\bD$ such that $\bA \in \mathbb R^{Lp \times p(n-1)}, \bLambda \in \mathbb R^{p(n-1)\times p(n-1)}$ and $\bV \in \mathbb R^{np \times p(n-1)}$. Let $\bZ = \bA \bLambda$. By definition, $\bZ$ has the same singular values as $\bD$. Furthermore, since $L \geq n-1$, $\bA^\top \bA = \bI_{p(n-1)}$ so there exists $\bZ^\dagger$ such that $\bZ^\dagger \bZ = \bI_{p(n-1)}$. In addition, $\Lambda_{\max}(\bZ^\dagger) = \frac{1}{\Lambda_{\min}(\bZ)}$ completing the proof of \ref{lemma:fgs5}.

    Because the sums of all rows of $\bD_1$ are zero, $\bD_1\mathbf 1_n = \bzero_{L}$. Since $G$ is connected giving $\bD_1$ a rank of $n-1$, the null space of $\bD_1$ has dimension $1$ with $\mathbf 1_n / \sqrt{n}$ as the only basis unit vector. This implies $\frac{\mathbf 1_n \mathbf 1_n^\top}{n}$ is a projection matrix onto the null space of $\bD_1$. Let $\bD_1 = \bA_1 \bLambda_1 \bV_1^\top$ be the SVD giving $\bD = (\bI_p \otimes \bA_1)(\bI_p \otimes \bLambda_1)(\bI_p \otimes \bV_1^\top)$. Therefore, $\bV_\alpha \bV_\alpha^\top = \bI_p \otimes \frac{\mathbf 1_n \mathbf 1_n^\top}{n}$ is a projection matrix onto the null space of $\bD$, proving \ref{lemma:fgs6}. 
\end{proof}

\begin{lemma}[Deterministic Affinity Graph Error Bounds]\label{lemma:connectivity_lemma}
    Let $\vect(\bX) = \vect(\bU) + \bepsilon$, where $\bepsilon$ are independent sub-Gaussian random variables with variance $\sigma^2$. Let $\bD$ be such that $\sum_{i < j} \|\bD_{\mathcal R(i, j)} \vect(\bU) \|_2 = \sum_{i < j} \sqrt{\Phi_{ij}} \|\bU_{i\cdot} - \bU_{j \cdot}\|_2$, where $\mathcal R$ is some index set and $\bD = \bA \bLambda \bV_\beta^\top$ is the singular value decomposition. If the graph formed by the deterministic adjacency matrix $\bPhi$ is connected then
    \begin{displaymath}
        P\brk[s]3{\|\bepsilon^\top \bV_\beta (\bA \bLambda)^\dagger \|_\infty \geq 2\sigma \sqrt{\frac{\log(p |\mathcal E_{\bPhi}|)}{a(\bPhi)}} } \leq \frac{2}{p |\mathcal E_{\bPhi }|},
    \end{displaymath}
    where $|\mathcal E_{\bPhi}|$ is the number of edges in the graph formed by $\bPhi$.
\end{lemma}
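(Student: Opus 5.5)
The plan is to reduce the statement to a union bound over the coordinates of the random vector $\bz^\top := \bepsilon^\top \bV_\beta (\bA \bLambda)^\dagger$, using a sub-Gaussian tail bound for each coordinate. By Lemma \ref{lemma:affinity-graph-structure}, $\bz$ has dimension $p(n-1)$. The number of rows of $\bD$ is $Lp = p|\mathcal E_{\bPhi}|$, and since $L \geq n-1$, it suffices to control at most $p|\mathcal E_{\bPhi}|$ coordinates. (If one prefers to index by rows of $\bD$, one can equivalently bound $\bA^\top$-type projections; either way the count is at most $p|\mathcal E_{\bPhi}|$.)

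Each coordinate $z_m = \bepsilon^\top \bc_m$ is a linear combination of independent mean-zero sub-Gaussian variables. Here $\bc_m$ is the $m$th column of $\bV_\beta(\bA\bLambda)^\dagger$. Sub-Gaussianity of a weighted sum of independent sub-Gaussians gives $P(|z_m| \geq t) \leq 2\exp\{-t^2/(2\sigma^2\|\bc_m\|_2^2)\}$, taking $\sigma$ as the sub-Gaussian proxy, as is conventional in this literature.

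The column norms then follow from Lemma \ref{lemma:affinity-graph-structure}\ref{lemma:fgs5}. Since $\bV_\beta$ has orthonormal columns, $\|\bc_m\|_2 \leq \|\bV_\beta\|_{op}\,\|(\bA\bLambda)^\dagger\|_{op} \leq 1/\sqrt{a(\bPhi)}$. Hence $P(|z_m|\geq t) \leq 2\exp\{-t^2 a(\bPhi)/(2\sigma^2)\}$.

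Finally, take the union bound over the at most $p|\mathcal E_{\bPhi}|$ coordinates with $t = 2\sigma\sqrt{\log(p|\mathcal E_{\bPhi}|)/a(\bPhi)}$. Each term becomes $2\exp\{-2\log(p|\mathcal E_{\bPhi}|)\} = 2/(p|\mathcal E_{\bPhi}|)^2$, and multiplying by the count gives the claimed $2/(p|\mathcal E_{\bPhi}|)$.

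The main obstacle is bookkeeping rather than depth. First, the variance parameter $\sigma^2$ must be treated as a sub-Gaussian proxy so the tail constant is exactly $1/2$. Second, the number of coordinates must be bounded correctly: $p(n-1) \leq p|\mathcal E_{\bPhi}|$ by connectivity. Third, the deterministic nature of $\bPhi$, which the theorem assumes is independent of $\bE$, is what allows conditioning on $\bD$ and treating $\bc_m$ as fixed.
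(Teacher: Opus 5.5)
Your proposal is correct and follows the paper's proof essentially step for step. The paper also takes a union bound over the $p|\mathcal E_{\bPhi}|$ coordinates of $\bepsilon^\top \bV_\beta (\bA\bLambda)^\dagger$, bounds each coefficient vector by $\|\bV_\beta\|_2\,\|(\bA\bLambda)^\dagger\|_2 \le 1/\sqrt{a(\bPhi)}$ via part (v) of Lemma~\ref{lemma:affinity-graph-structure}, and chooses $2\sigma\sqrt{\log(p|\mathcal E_{\bPhi}|)/a(\bPhi)}$ as the threshold; you are right that $\sigma$ must be read as the sub-Gaussian proxy, which the paper uses only implicitly.

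One harmless slip: that vector has exactly $Lp = p|\mathcal E_{\bPhi}|$ entries, not $p(n-1)$ (which is the length of $\bepsilon^\top \bV_\beta$). Your union-bound count is therefore exact rather than just an upper bound.
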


\begin{proof} Similar to the proof of Lemma 6 in \citet{tanStatisticalPropertiesConvex2015} we use basic properties of sub-Gaussian random variables and a union bound. Let $\be_j$ be a unit vector of length $p|\mathcal E_{\bPhi}|$ with a 1 in the $j$th element. Let $v_j = \bepsilon^\top \bV_\beta (\bA \bLambda)^\dagger \be_j$. Clearly $\|\bepsilon^\top \bV_\beta (\bA \bLambda)^\dagger \|_\infty = \max_j |v_j|$. We have $\bLambda_{\max}(\bV_\beta) = 1$ and $\bLambda_{\max} [(\bA \bLambda)^\dagger] = \frac{1}{\sqrt{a(\bPhi)}}$ by Lemma \ref{lemma:affinity-graph-structure}. As a consequence, $v_j$ is sub-Gaussian with zero mean and variance at most $\frac{\sigma^2}{a(\bPhi)}$. By a union bound,
\begin{displaymath}
    P\brk3{\max_j |v_j| \geq z} \leq p |\mathcal E_{\bPhi}| P\brk{|v_j| \geq z} \leq 2 p |\mathcal E_{\bPhi}| \exp\brk[s]3{-\frac{z^2 a(\bPhi)}{2\sigma^2}}.
\end{displaymath}
Choose $z = 2\sigma \sqrt{\frac{\log (p |\mathcal E_{\bPhi}|)}{a(\bPhi)}}$, to complete the proof.
\end{proof}

\section{Proof of Theorem 1}\label{appendix:main_proofs}

The Hanson-Wright inequality is common in the convex clustering literature and we restate it here for completeness.

\begin{lemma}[Hanson-Wright Inequality \citep{hansonBoundTailProbabilities1971}] \label{lemma:hanson-wright}
    Let $\bepsilon$ be a sub-Gaussian random vector with independent components of mean 0 and variance $\sigma^2$. If $\bA$ is a symmetric matrix, then there exists a constant $C > 0$ such that for any $t \geq 0$,
    \begin{displaymath}
        P\brk[s]{\bepsilon^\top \bA \bepsilon \geq t + \sigma^2 \trace(\bA)} \leq \exp\brk[s]3{-C\min\brk3{\frac{t^2}{\sigma^4\|\bA\|^2_F}, \frac{t}{\sigma^2 \|\bA\|_2}}}.
    \end{displaymath}
\end{lemma}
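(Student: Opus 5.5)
Since this is the classical Hanson--Wright inequality, I would follow the modern route of Rudelson and Vershynin rather than the original argument. I will interpret ``sub-Gaussian with variance $\sigma^2$'' as a bound $\|\epsilon_i\|_{\psi_2}\le K\sigma$ with $K$ an absolute constant, which the constant $C$ is allowed to absorb. By homogeneity I would rescale $\bepsilon \mapsto \bepsilon/\sigma$ and reduce to $\sigma=1$. Since $\mathbb E\,\bepsilon^\top\bA\bepsilon=\trace(\bA)$, the claim is a one-sided deviation bound for $\bepsilon^\top\bA\bepsilon-\mathbb E\,\bepsilon^\top\bA\bepsilon$. I would split the centered quadratic form into a diagonal part
\[
S_{\mathrm{diag}}=\sum_i A_{ii}(\epsilon_i^2-1)
\]
and an off-diagonal part
\[
S_{\mathrm{off}}=\sum_{i\neq j}A_{ij}\epsilon_i\epsilon_j .
\]
It then suffices to bound $P(S_{\mathrm{diag}}\ge t/2)$ and $P(S_{\mathrm{off}}\ge t/2)$ separately by expressions of the stated form.

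For the diagonal part I would use that $\epsilon_i^2-1$ is centered and sub-exponential with $\|\epsilon_i^2-1\|_{\psi_1}\lesssim 1$. Bernstein's inequality for independent sub-exponential variables then gives
\[
P(S_{\mathrm{diag}}\ge t/2)\le \exp\Bigl[-c\min\Bigl(\frac{t^2}{\sum_i A_{ii}^2},\frac{t}{\max_i|A_{ii}|}\Bigr)\Bigr].
\]
Since $\sum_i A_{ii}^2\le\|\bA\|_F^2$ and $\max_i|A_{ii}|\le\|\bA\|_2$, this already has the required form.

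For the off-diagonal part I would bound the moment generating function $\mathbb E\exp(\lambda S_{\mathrm{off}})$ and then apply Chernoff, in four steps.
\begin{enumerate}[label=(\alph*)]
\item Apply the decoupling inequality, which replaces $S_{\mathrm{off}}$ by $4\sum_{i,j}A_{ij}\epsilon_i\epsilon'_j$ with $\bepsilon'$ an independent copy. This is valid for convex functions of the form $\exp(\lambda\,\cdot)$.
\item Condition on $\bepsilon'$. The sum is then linear in $\bepsilon$, so its conditional MGF is at most $\exp(C\lambda^2\|\bA\bepsilon'\|_2^2)$.
\item To handle $\mathbb E\exp(C\lambda^2\|\bA\bepsilon'\|_2^2)$, compare with Gaussians. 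Introduce independent standard Gaussian vectors $\bg,\bg'$, use the identity $\mathbb E_{\bg}\exp(\mu\,\bg^\top\bx)=\exp(\mu^2\|\bx\|^2/2)$, and swap the roles of $\bepsilon'$ and $\bg$. This reduces everything to the Gaussian chaos $\bg^\top\bA\bg'$.
\item Diagonalize $\bA$ by rotation invariance. The chaos becomes $\sum_i s_i g_ig'_i$ with $s_i$ the eigenvalues of $\bA$, whose MGF is explicit. The result is $\mathbb E\exp(\lambda S_{\mathrm{off}})\le\exp(C\lambda^2\|\bA\|_F^2)$ for all $|\lambda|\le c/\|\bA\|_2$.
\end{enumerate}
Optimizing Chernoff's bound over $\lambda$ in this range gives the mixed tail $\exp[-c\min(t^2/\|\bA\|_F^2,\,t/\|\bA\|_2)]$.

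Finally I would combine the two tails by a union bound and undo the scaling by $\sigma^2$ to recover the displayed inequality. The factor $2$ from the union bound is absorbed into $C$, using that the bound is trivial unless the exponent exceeds a constant. I expect the main obstacle to be the off-diagonal MGF bound, specifically the decoupling and Gaussian-replacement step (c), where one must keep the restriction $|\lambda|\lesssim 1/\|\bA\|_2$ consistent throughout. Given that the paper simply restates the inequality, I would in practice cite \citet{hansonBoundTailProbabilities1971} (or Rudelson--Vershynin) for this step rather than reprove decoupling.
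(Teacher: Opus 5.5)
The paper gives no proof of this lemma; it only restates it with a citation to \citet{hansonBoundTailProbabilities1971}. Your outline follows the standard Rudelson--Vershynin argument, and most of it is sound. That includes the one-sided Bernstein bound for the diagonal part and decoupling, applied to the off-diagonal part $\bA_{\mathrm{off}}$ (diagonal set to zero), which satisfies $\|\bA_{\mathrm{off}}\|_F\le\|\bA\|_F$ and $\|\bA_{\mathrm{off}}\|_2\le 2\|\bA\|_2$. The conditional MGF bound, the Gaussian comparison, and the explicit MGF of $\sum_i s_i g_i g_i'$ are also fine. Your normalization $\|\epsilon_i\|_{\psi_2}\le K\sigma$ is in fact necessary, because the variance alone does not control the sub-Gaussian constant on which $C$ must depend.

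The step that fails is the final combination. The lemma has prefactor $1$, and the factor $2$ from the union bound cannot be absorbed into $C$ by your argument. Let $m=\min\bigl(t^2/(\sigma^4\|\bA\|_F^2),\,t/(\sigma^2\|\bA\|_2)\bigr)$. The target $\exp(-Cm)$ is strictly below $1$ for every $t>0$, so the regime where $2\exp(-cm)\ge 1$ is \emph{not} trivial. In that regime you still need $P(Z\ge t)\le\exp(-Cm)$ for $Z=\bepsilon^\top\bA\bepsilon-\sigma^2\trace(\bA)$. For small $t$ this amounts to $P(Z<t)\gtrsim m$, and the union bound says nothing about it. Absorbing the factor this way works only when the target has a prefactor larger than $1$, as in the two-sided statement with prefactor $2$.

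The simplest repair is to combine the two parts at the level of moment generating functions rather than tails. For $S_{\mathrm{diag}}=\sum_i A_{ii}(\epsilon_i^2-\sigma^2)$, the sub-exponential MGF bound for each summand plus independence gives $\mathbb E e^{\mu S_{\mathrm{diag}}}\le\exp(C\mu^2\sigma^4\|\bA\|_F^2)$ for $|\mu|\le c/(\sigma^2\|\bA\|_2)$. Together with your off-diagonal bound, Cauchy--Schwarz yields
\[
\mathbb E e^{\lambda Z}\le \bigl(\mathbb E e^{2\lambda S_{\mathrm{diag}}}\bigr)^{1/2}\bigl(\mathbb E e^{2\lambda S_{\mathrm{off}}}\bigr)^{1/2}\le \exp\bigl(C'\lambda^2\sigma^4\|\bA\|_F^2\bigr), \qquad 0\le\lambda\le\frac{c'}{\sigma^2\|\bA\|_2}.
\]
A single Chernoff optimization over this range then gives the bound with prefactor exactly $1$.

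Alternatively, keep the union bound for large $m$ and handle small $m$ using that $Z$ is centered with $\mathbb E Z^2\lesssim\sigma^4\|\bA\|_F^2$. If $P(Z\ge t)\le 1/2$, the target $\exp(-Cm)\ge 1/2$ holds in this regime once $C$ is small. If $P(Z\ge t)>1/2$, then $t/2\le\mathbb E Z_+=\mathbb E Z_-\le(\mathbb E Z^2)^{1/2}P(Z<0)^{1/2}$. Hence $P(Z\ge t)\le 1-ct^2/(\sigma^4\|\bA\|_F^2)\le\exp(-cm)$.
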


We proceed with the proof.

\begin{proof}

Let $L_R := \|\bPhi\|_0/2 = |\mathcal E^R|$ and $L_C := \|\bPsi\|_0 / 2 = |\mathcal E^C|$ be the number of edges from the row and column affinity graphs, respectively. Let $\bx := \vect(\bX), \bu := \vect(\bU)$ be the vectorizations of $\bX, \bU$, respectively.
Let $\bD^R \in \mathbb R^{[L_Rp] \times np}$ be such that $\bD^R_{\mathcal R(i, j)}\bu = \sqrt{\Phi_{ij}}(\bU_{i\cdot} - \bU_{j\cdot})$ if $\Phi_{ij}> 0$ where $\mathcal R(i, j)$ is an index set. Let $\mathcal E^R = \{(i, j) \mid \Phi_{ij} > 0, i < j\}$ be the set of all edges for the row affinities. Via Lemma \ref{lemma:affinity-graph-structure}, we know $\bD^R$ exists and has rank $p(n-1)$. Let $\bD^R := \bA^R \bLambda^R (\bV^R_{\beta})^\top$ be the singular value decomposition of $\bD^R$. Explicitly, $\bA^R \in \mathbb R^{L_Rp \times p(n-1)}, \bLambda^R \in \mathbb R^{p(n-1)\times p(n-1)}$ and $\bV_\beta^R \in \mathbb R^{np \times p(n-1)}$. Via properties of the SVD we can write $\bV^R := [\bV^R_\alpha, \bV^R_\beta] \in \mathbb R^{np \times np}$ where $\bV^R_\alpha$ and $\bV^R_\beta$ are orthonormal matrices where $(\bV^R_\alpha)^\top \bV^R_\beta = \bzero$. Let the following symbols be defined:
\begin{align*}
    \bbeta^R & := (\bV_\beta^R)^\top \bu, \\
    \balpha^R & := (\bV_\alpha^R)^\top \bu, \\
    \bZ^R & := \bA^R \bLambda^R.
\end{align*}
These symbols imply
\begin{align*}
    \bu & = \bV_\beta^R \bbeta^R + \bV_\alpha^R \balpha^R, \\
    \bD^R \bu & = \bA^R \bLambda^R (\bV_\beta^R)^\top \bu = \bZ^R \bbeta^R.
\end{align*}

Similarly for the columns, let $\bD^C \in \mathbb R^{[L_Cn]\times np}$ be such that $\bD^C_{\mathcal C(k, \ell)}\bu = \sqrt{\Psi_{k\ell}}( \bU_{\cdot k} - \bU_{\cdot \ell})$, if $\Psi_{k\ell} > 0$ where $\mathcal C(k, \ell)$ is an index set. Via Lemma \ref{lemma:affinity-graph-structure}, $\bD^C$ has rank $n(p-1)$, as all the same conditions hold for $\bPsi$, and we are examining the transpose of $\bU$. Let $\mathcal E^C = \{(k, \ell) \mid \Psi_{k\ell} > 0, k < \ell \}$ be the set of all edges for the column affinities. Let the following analogous symbols be defined:
\begin{align*}
    \bD^C & = \bA^C \bLambda^C (\bV_\beta^C)^\top, \\
    \bA^C & \in \mathbb R^{L_Cn \times n(p-1)}, \bLambda^C \in \mathbb R^{n(p-1)\times n(p-1)}, \bV_\beta^C \in \mathbb R^{np \times n(p-1)}, \\
    \bV^C & := [\bV^C_\alpha, \bV^C_\beta], \quad (\bV_\alpha^C)^\top \bV^C_\beta = \bzero, \\
    \bbeta^C & := (\bV_\beta^C)^\top \bu, \\
    \balpha^C & := (\bV_\alpha^C)^\top \bu, \\
    \bZ^C & := \bA^C \bLambda^C, \\
    \bu & = \bV_\beta^C\bbeta^C + \bV_\alpha^C \balpha^C, \\
    \bD^C \bu & = \bA^C \bLambda^C (\bV_\beta^C)^\top \bu = \bZ^C \bbeta^C.
\end{align*}

Let
\begin{displaymath}
    \mathcal W = \brk[c]3{\bW \in \mathbb R^{np\times np} \mid \bW = \diag(w_1, \ldots, w_p) \otimes \bI_n, w_1,\ldots,w_p \geq 0, \sum_{\ell = 1}^p w_\ell = 1},
\end{displaymath}
be the set of matrices allowing $\|\bx - \bu\|^2_{\bW^2 + \lambda \bW} = (\bx - \bu)^\top (\bW^2 + \lambda \bW) (\bx - \bu)$ to be equal to the weighted norm used in \eqref{eqn:appendix:objective}. Let $\gamma' = \frac{\gamma}{np}$. Then objective \eqref{eqn:appendix:objective} is equivalent to
\begin{equation}
    \begin{split}
        G(\balpha^R, \balpha^C, \bbeta^R, \bbeta^C, \bW) & = \frac{1}{4np} \|\bx - \bV^R_\alpha \balpha^R - \bV^R_\beta \bbeta^R\|^2_{\bW^2 + \lambda \bW} + \gamma' \sum_{(i, j) \in \mathcal E^R} \|\bZ^R_{\mathcal R(i, j)} \bbeta^R\|_2 \\
        & \quad + \frac{1}{4np} \|\bx - \bV^C_\alpha \balpha^C - \bV^C_\beta \bbeta^C \|^2_{\bW^2 + \lambda \bW}  + \gamma' \sum_{(k, \ell) \in \mathcal E^C} \|\bZ^C_{\mathcal C(k, \ell)} \bbeta^C\|_2, \\
        \text{subject to } & \bW \in \mathcal W,\quad \bV^R [\balpha^R, \bbeta^R] = \bV^C[\balpha^C, \bbeta^C].  
    \end{split} \label{eqn:equiv_obj}
\end{equation}

Let $\hat \bU$ and $\hat \bw$ be local minimizers of \eqref{eqn:appendix:objective}. Due to convexity in the $\bU$ block when $\hat \bw$ is fixed (recall $F$ is biconvex), $\hat \bU$ is the minimizer with $\hat \bw$ as the input:
\begin{equation*}
    \begin{split}
    & \frac{1}{np} H(\hat \bU, \hat \bw) + \gamma' \sum_{(i, j) \in \mathcal E^R} \sqrt{\Phi_{ij}} \|\hat \bU_{i\cdot} - \hat \bU_{j\cdot}\|_2 + \gamma' \sum_{(k, \ell) \in \mathcal E^C} \sqrt{\Psi_{k\ell}} \|\hat \bU_{\cdot k} - \hat \bU_{ \cdot \ell}\|_2 \\
    & \leq \frac{1}{np} H(\bU, \hat \bw) + \gamma' \sum_{(i, j) \in \mathcal E^R} \sqrt{\Phi_{ij}}\| \bU_{i\cdot} - \bU_{j\cdot}\|_2 + \gamma' \sum_{(k, \ell) \in \mathcal E^C}\sqrt{\Psi_{k\ell}} \|    \bU_{\cdot k} - \bU_{ \cdot \ell}\|_2.
    \end{split}
\end{equation*}

Recall $\bX = \bU + \bE$:
\begin{equation*}
    \begin{split}
    & \frac{1}{2np} \sum_{\ell=1}^p \brk{\hat w_{\ell}^2 + \lambda \hat w_\ell}\|\bU_{\cdot \ell} + \bE_{\cdot \ell} - \hat \bU_{\cdot \ell}\|_2^2 \\
    & \quad + \gamma' \sum_{(i, j) \in \mathcal E^R} \sqrt{\Phi_{ij}} \|\hat \bU_{i\cdot} - \hat \bU_{j\cdot}\|_2 + \gamma' \sum_{(k, \ell) \in \mathcal E^C} \sqrt{\Psi_{k\ell}} \|\hat \bU_{\cdot k} - \hat \bU_{ \cdot \ell}\|_2 \\
    & \leq \frac{1}{2np} \sum_{\ell=1}^p {\brk{\hat w_{\ell}^2+ \lambda \hat w_\ell }\|\bE_{\cdot \ell}\|_2^2} \\
    & \quad + \gamma' \sum_{(i, j) \in \mathcal E^R} \sqrt{\Phi_{ij}} \| \bU_{i\cdot} - \bU_{j\cdot}\|_2 + \gamma' \sum_{(k, \ell) \in \mathcal E^C} \sqrt{\Psi_{k\ell}} \|\bU_{\cdot k} - \bU_{ \cdot \ell}\|_2.
    \end{split}
\end{equation*}

Now expand all norms and collect like terms:
\begin{equation}
    \begin{split}
    & \frac{1}{2np} \sum_{\ell=1}^p \brk{\hat w_\ell^2 + \lambda \hat w_\ell}\|\bU_{\cdot \ell}- \hat \bU_{\cdot \ell} \|_2^2 \\
    & \quad + \gamma' \sum_{(i, j) \in \mathcal E^R} \sqrt{\Phi_{ij}} \|\hat \bU_{i\cdot} - \hat \bU_{j\cdot}\|_2 + \gamma' \sum_{(k, \ell) \in \mathcal E^C} \sqrt{\Psi_{k\ell}} \|\hat \bU_{\cdot k} - \hat \bU_{ \cdot \ell}\|_2 \\
    & \leq \frac{1}{np} \sum_{\ell=1}^p \brk{\hat w_{\ell}^2 + \lambda \hat w_\ell }\bE_{\cdot \ell}^\top (\hat \bU_{\cdot \ell} - \bU_{\cdot \ell}) \\
    & \quad + \gamma' \sum_{(i, j) \in \mathcal E^R} \sqrt{\Phi_{ij}} \| \bU_{i\cdot} - \bU_{j\cdot}\|_2 + \gamma' \sum_{(k, \ell) \in \mathcal E^C} \sqrt{\Psi_{k\ell}} \|\bU_{\cdot k} - \bU_{ \cdot \ell}\|_2.
    \end{split} \label{eqn:inequality_to_bound}
\end{equation}

Let $\hat \bW = \hat \bw \otimes \bI_n \in \mathcal W$. Let $\hat \balpha^R, \hat \bbeta^R, \hat \balpha^C, \hat \bbeta^C$ be $\argmin G(\cdot, \cdot, \cdot, \cdot, \hat \bW)$ from $\eqref{eqn:equiv_obj}$ so $\hat \bu = \bV_\alpha^R \hat \balpha^R + \bV_\beta^R \hat \bbeta^R = \bV_\alpha^C \hat \balpha^C + \bV_\beta^C \hat \bbeta^C$. Let $\bepsilon = \vect(\bE)$. We can rewrite \eqref{eqn:inequality_to_bound} as:
\begin{equation}
    \begin{split}
    & \sum_{M \in \{R, C\}} \frac{1}{4np} \|\bV^M_\alpha ( \balpha^M - \hat \balpha^M) + \bV^M_\beta ( \bbeta^M - \hat \bbeta^M) \|^2_{\hat \bW^2 + \lambda \hat \bW} + \gamma' \sum_{(i, j) \in \mathcal E^M} \|\bZ^M_{\mathcal M(i, j)} \hat \bbeta^M\|_2 \\
    & \leq \sum_{M \in \{R, C\}} \frac{1}{2 np} \bepsilon^\top (\hat\bW^2 + \lambda \hat \bW) [\bV^M_\alpha ( \hat \balpha^M -  \balpha^M) + \bV^M_\beta ( \hat \bbeta^M - \bbeta^M)] + \gamma' \sum_{(i, j) \in \mathcal E^M} \|\bZ^M_{\mathcal M(i, j)} \bbeta^M\|_2. \\
    \end{split} \label{eqn:inequality_rewritten}
\end{equation}
$\hat \balpha^M$ with $M \in \{R, C\}$, is a minimizer of \eqref{eqn:equiv_obj} when the $\bW$ block is $\hat \bW$. To be a coordinate-wise minimizer, the gradient with respect to $\balpha^M$ of \eqref{eqn:equiv_obj} must equal 0:
\begin{align*}
    0 & = \nabla_{\balpha^M} \brk3{ \frac{1}{4np} \|\bx - \bV^M_\alpha \balpha^M - \bV^M_\beta \bbeta^M \|^2_{\hat \bW^2 + \lambda \hat \bW}}, \\
    & = \frac{1}{2np} (\bV^M_\alpha)^\top (\hat \bW^2 + \lambda \hat \bW) ( \bx - \bV^M_\alpha \balpha^M - \bV^M_\beta \bbeta^M).
\end{align*}
This implies a $\hat \balpha^M$ such that $ \bx - \hat \bV^M_\alpha \hat \balpha^M - \hat \bV^M_\beta \hat \bbeta^M = \bzero$. By using that $\bx = \bu + \bepsilon$, $\bu = \bV_\alpha^M \balpha^M + \bV_\beta^M \bbeta^M$, $(\bV_\alpha^M)^\top \bV_\beta^M = \bzero$ and $ (\bV_\alpha^M)^\top \bV_\alpha = \bI$, this gives $\hat \balpha^M = (\bV^M_\alpha)^\top (\bx - \bV^M_\beta \hat \bbeta^M) = \balpha^M + (\bV^M_\alpha)^\top \bepsilon$ and
\begin{equation}
    \bV^M_\alpha ( \hat \balpha^M -  \balpha^M) = \bV^M_\alpha (\bV^M_\alpha)^\top \bepsilon. \label{eqn:alpha-diff-sub}
\end{equation}

We simplify the RHS of \eqref{eqn:inequality_rewritten} with \eqref{eqn:alpha-diff-sub} as:
\begin{equation*}
    \begin{split}
    & \sum_{M \in \{R, C\}} \frac{1}{2 np} \bepsilon^\top (\hat\bW^2 + \lambda \hat \bW) [\bV^M_\alpha ( \hat \balpha^M -  \balpha^M) + \bV^M_\beta ( \hat \bbeta^M - \bbeta^M)] + \gamma' \sum_{(i, j) \in \mathcal E^M} \|\bZ^M_{\mathcal M(i, j)} \bbeta^M\|_2, \\
    & = \sum_{M \in \{R, C\}} \frac{1}{2np} \underbrace{\bepsilon^\top (\hat \bW^2 + \lambda \hat \bW) \bV^M_\alpha (\bV^M_\alpha)^\top \bepsilon}_{(I)} + \frac{1}{2np}\underbrace{\bepsilon^\top (\hat \bW^2 + \lambda \hat \bW) \bV^M_\beta ( \hat \bbeta^M - \bbeta^M)}_{(II)} \\
    & \quad + \gamma' \sum_{(i, j) \in \mathcal E^M} \|\bZ^M_{\mathcal M(i, j)} \bbeta^M\|_2. 
    \end{split}
\end{equation*}
We bound term ($I$) in the summation using
\begin{align*}
    \bepsilon^\top (\hat \bW^2 + \lambda \hat \bW) \bV^M_\alpha (\bV^M_\alpha)^\top \bepsilon & = \trace\brk[s]2{\bepsilon^\top (\hat \bW^2 + \lambda \hat \bW) \bV^M_\alpha (\bV^M_\alpha)^\top \bepsilon}, \\
    & = \trace\brk[s]2{(\hat \bW^2 + \lambda \hat \bW) \bV^M_\alpha (\bV^M_\alpha)^\top \bepsilon \bepsilon^\top }, \\
    & = \sum_{i=1}^{np} (\hat W_{ii}^2 + \lambda \hat W_{ii}) \brk[s]{\bV^M_\alpha (\bV^M_\alpha)^\top \bepsilon \bepsilon^\top}_{ii}, \\
    & \leq \|(\hat \bW^2 + \lambda \hat \bW)\|_\infty \trace \brk[s]2{\bV^M_\alpha (\bV^M_\alpha)^\top \bepsilon \bepsilon^\top }, \\
    & \leq (1 + \lambda) { \bepsilon^\top \bV^M_\alpha (\bV^M_\alpha)^\top \bepsilon }.
\end{align*}

To bound ($II$) we use Lemma \ref{lemma:affinity-graph-structure} for both the row and column modes. As a result, there exists a pseudoinverse $(\bZ^M)^\dagger$ where $(\bZ^M)^\dagger\bZ^M = \bI_{\rank(\bZ^M)}$. Insert this expression into ($II$): 
\begin{align*}
    \bepsilon^\top (\hat \bW^2 + \lambda \hat \bW) \bV^M_\beta ( \hat \bbeta^M - \bbeta^M) & =     \bepsilon^\top (\hat \bW^2 + \lambda \hat \bW) \bV^M_\beta (\bZ^M)^\dagger \bZ^M (\hat \bbeta^M - \bbeta^M), \\
    & = \sum_{(i, j) \in \mathcal E^M} \bepsilon^\top (\hat \bW^2 + \lambda \hat \bW) \bV^M_\beta (\bZ^M_{\mathcal M(i, j)})^\dagger \bZ^M_{\mathcal M(i, j)} (\hat \bbeta^M - \bbeta^M), \\
    & \leq \sum_{(i, j) \in \mathcal E^M} \norm2{\bepsilon^\top (\hat \bW^2 + \lambda \hat \bW) \bV^M_\beta (\bZ^M_{\mathcal M(i, j)})^\dagger}_2 \\
    & \quad \times \norm2{\bZ^M_{\mathcal M(i, j)} (\hat \bbeta^M - \bbeta^M)}_2, \\
    & \leq \max_{(i, j) \in \mathcal E^M} \norm2{\bepsilon^\top (\hat \bW^2 + \lambda \hat \bW) \bV^M_\beta (\bZ^M_{\mathcal M(i, j)})^\dagger}_2  \\
    & \quad \times \sum_{(i, j) \in \mathcal E^M} \norm2{\bZ^M_{\mathcal M(i, j)} (\hat \bbeta^M - \bbeta^M)}_2.
\end{align*}
We can bound $\max_{(i, j) \in \mathcal E^M} \norm2{\bepsilon^\top (\hat \bW^2 + \lambda \hat \bW) \bV^M_\beta (\bZ^M_{\mathcal M(i, j)})^\dagger}_2$ by using the equivalence of the 2-norm and the $\infty$-norm. For simplicity, consider the mode of $M=R$
\begin{align*}
    \max_{(i, j) \in \mathcal E^R} \norm2{\bepsilon^\top (\hat \bW^2 + \lambda \hat \bW) \bV^R_\beta (\bZ^R_{\mathcal R(i, j)})^\dagger}_2 & \leq \sqrt{p} \max_{(i, j) \in \mathcal E^R} \norm2{\bepsilon^\top (\hat \bW^2 + \lambda \hat \bW) \bV^R_\beta (\bZ^R_{\mathcal R(i, j)})^\dagger}_\infty, \\
    & = \sqrt{p} \|\bepsilon^\top (\hat \bW^2 + \lambda \hat \bW) \bV^R_\beta (\bZ^R)^\dagger \|_\infty.
\end{align*}
Now let $\boldsymbol e_j$ be the vector of length $p L_R$ with a one in the $j$th entry and zero elsewhere. Then
\begin{displaymath}
    \|\bepsilon^\top (\hat \bW^2 + \lambda \hat \bW) \bV^R_\beta (\bZ^R)^\dagger \|_\infty = \max_j |\bepsilon^\top (\hat \bW^2 + \lambda \hat \bW) \bV^R_\beta (\bZ^R)^\dagger \boldsymbol e_j|.
\end{displaymath}
Bounding in a similar manner to ($I$):
\begin{align*}
    \max_j |\bepsilon^\top (\hat \bW^2 + \lambda \hat \bW) \bV^R_\beta (\bZ^R)^\dagger \boldsymbol e_j| & = \max_j \trace \brk[s]2{(\hat \bW^2 + \lambda \hat \bW) \bV^R_\beta (\bZ^R)^\dagger \boldsymbol e_j \bepsilon^\top}, \\
    & = \max_j \sum_{i=1}^{np} (\hat W_{ii}^2 + \lambda \hat W_{ii}) \brk[s]2{\bV^R_\beta (\bZ^R)^\dagger \boldsymbol e_j \bepsilon^\top}_{ii}, \\
    & \leq  \|(\hat \bW^2 + \lambda \hat \bW)\|_\infty \max_j \trace\brk[s]2{ \bV^R_\beta (\bZ^R)^\dagger \boldsymbol e_j \bepsilon^\top}, \\
    & \leq  (1 + \lambda) \max_j \bepsilon^\top \bV^R_\beta (\bZ^R)^\dagger \boldsymbol e_j.
\end{align*}

To summarize our results so far by using all bounds on \eqref{eqn:inequality_to_bound},
\begin{equation}
    \begin{split}
        & \frac{1}{2np}\| \bU - \hat \bU\|_{\hat \bw^2 + \lambda \hat \bw}^2 + \gamma' \sum_{(i, j) \in \mathcal E^R} \sqrt{\Phi_{ij}} \|\hat \bU_{i\cdot} - \hat \bU_{j\cdot}\|_2 + \gamma' \sum_{(k, \ell) \in \mathcal E^C} \sqrt{\Psi_{k\ell}} \|\hat \bU_{\cdot k} - \hat \bU_{ \cdot \ell}\|_2 \\
        & \leq \frac{1 + \lambda}{2np} \sum_{M \in \{R, C\}} \max_j [ \bepsilon^\top \bV^M_\beta (\bZ^M)^\dagger \boldsymbol e_j ] \sum_{(i, j) \in \mathcal E^M} \norm2{\bZ^M_{\mathcal M(i, j)} (\hat \bbeta^M - \bbeta^M)}_2 \begin{cases} \sqrt{p} & M = R \\ \sqrt{n} & M = C\end{cases} \\
        & \quad + {\bepsilon^\top \bV^M_\alpha (\bV^M_\alpha)^\top \bepsilon} + \gamma' \sum_{(i, j) \in \mathcal E^M} \|\bZ^M_{\mathcal M(i, j)} \bbeta^M\|_2.
    \end{split} \label{eqn:last-a.s.}
\end{equation}
Note that this inequality holds almost surely, and we now begin considering inequalities that hold with high probability.

\textbf{Bounding $\frac{1 + \lambda}{2np} {\bepsilon^\top \bV^M_\alpha (\bV^M_\alpha)^\top \bepsilon}$ with high probability}:

Due to properties of projection matrices $\|\bV^R_\alpha (\bV_\alpha^R)^\top\|_F^2 = p, \|\bV^C_\alpha (\bV_\alpha^C)^\top\|_F^2 = n$, and $\|\bV^R_\alpha (\bV_\alpha^R)^\top\|_2 = \|\bV^C_\alpha (\bV_\alpha^C)^\top\|_2 = 1 $ since both matrices are of rank $p$ and $n$, respectively. Now apply the standard Hanson-Wright inequality (Lemma \ref{lemma:hanson-wright}) for both the row and column modes with $t = \sigma^2 \sqrt{p \log(np)}$ and $t = \sigma^2 \sqrt{n \log(np)}$, respectively:
\begin{align*}
    P\brk[s]3{\frac{1 + \lambda}{2np}\bepsilon^\top \bV^R_\alpha (\bV_\alpha^R)^\top \bepsilon \geq \frac{\sigma^2(1 + \lambda)}{2np}\brk2{\sqrt{p \log(np)} + p} } & \leq  \exp\brk[c]3{-C \min \brk[s]2{\log(np),  \sqrt{p \log(np)}}}, \\
    P\brk[s]3{\frac{1 + \lambda}{2np}\bepsilon^\top \bV^C_\alpha (\bV_\alpha^C)^\top \bepsilon \geq \frac{\sigma^2(1 + \lambda)}{2np}\brk2{\sqrt{n \log(np)} + n} } & \leq  \exp\brk[c]3{-C \min \brk[s]2{\log(np),  \sqrt{n \log(np)}}}.
\end{align*}

\textbf{Bounding $\frac{1 + \lambda}{2np} \max_j \bepsilon^\top \bV^M_\beta (\bZ^M)^\dagger \boldsymbol e_j \begin{cases} \sqrt{p} & M = R \\ \sqrt{n} & M = C\end{cases}$ with high probability:}

For simplicity, let $M = R$. Via Lemma \ref{lemma:connectivity_lemma} we have 
\begin{displaymath}
    P\brk[s]3{\max_j |\bepsilon^\top \bV^R_\beta (\bZ^R)^\dagger \boldsymbol e_j| \geq  2\sigma \sqrt{\frac{\log(p L_R)}{a(\bPhi)}}} \leq \frac{2}{p L_R}.
\end{displaymath}
This implies if ${\gamma'} > \sigma(1+\lambda) \sqrt{\frac{\log(p L_R)}{n^2p a(\bPhi)}}$, then ${\gamma'} < \frac{1+\lambda}{2n\sqrt{p}}\max_j |\bepsilon^\top \bV^R_\beta (\bZ^R)^\dagger \boldsymbol e_j|$ with probability at most $\frac{2}{pL_R}$. Similarly for the column mode if ${\gamma'} > \sigma(1+\lambda) \sqrt{\frac{\log(n L_C)}{n p^2 a(\bPsi)}}$, then ${\gamma'} < \frac{1+\lambda}{2p\sqrt{n}}\max_j |\bepsilon^\top \bV^C_\beta (\bZ^C)^\dagger \boldsymbol e_j|$ with probability at most $\frac{2}{nL_C}$. Combining all these bounds, we now bound the RHS of \eqref{eqn:last-a.s.}:
\begin{equation*}
    \begin{split}
  & \frac{1 + \lambda}{2np} \sum_{M \in \{R, C\}} \max_j \bepsilon^\top \bV^M_\beta (\bZ^M)^\dagger \boldsymbol e_j \sum_{(i, j) \in \mathcal E^M} \norm2{\bZ^M_{\mathcal M(i, j)} (\hat \bbeta^M - \bbeta^M)}_2 \begin{cases} \sqrt{p} & M = R \\ \sqrt{n} & M = C\end{cases} \\
        & \quad + {\bepsilon^\top \bV^M_\alpha (\bV^M_\alpha)^\top \bepsilon} + \gamma' \sum_{(i, j) \in \mathcal E^M} \|\bZ^M_{\mathcal M(i, j)} \bbeta^M\|_2 \\
     & \leq \frac{\sigma^2 (1+\lambda)}{2} \brk[s]3{\frac{1}{n} + \frac{1}{p} + \sqrt{\frac{\log(np)}{n p^2}} + \sqrt{\frac{\log(np)}{n^2 p}}} + \gamma' \sum_{(i, j) \in \mathcal E^R} \norm2{\bZ^R_{\mathcal R(i, j)} (\hat \bbeta^R - \bbeta^R)}_2 \\
        & \quad + \gamma' \sum_{(k, \ell) \in \mathcal E^C} \norm2{\bZ^C_{\mathcal C(k, \ell)} (\hat \bbeta^C - \bbeta^C)}_2 + \gamma' \sum_{(i, j) \in \mathcal E^M} \|\bZ^M_{\mathcal M(i, j)} \bbeta^M\|_2
    \end{split}
\end{equation*}
with probability at least 
\begin{displaymath}
    1 - \exp\brk[c]3{-C \min \brk[s]2{\log(np),  \sqrt{p \log(np)}}} - \exp\brk[c]3{-C \min \brk[s]2{\log(np),  \sqrt{n \log(np)}}} - \frac{2}{pL_R} - \frac{2}{n L_C}
\end{displaymath} if $\gamma' > \frac{\sigma(1+\lambda)}{\sqrt{np}} \max \brk3{\sqrt{\frac{\log(p L_R)}{n a(\bPhi)}}, \sqrt{\frac{\log(n L_C)}{{p a(\bPsi)}}}} $. This probability can be further simplified since $L_R \geq n-1$ and $L_C \geq p - 1$:
\begin{align*}
    \frac{2}{p L_R} & \leq \frac{2}{p(n-1)} \leq \frac{4}{np}, \\
    \frac{2}{n L_C} & \leq \frac{2}{n(p-1)} \leq \frac{4}{np}.
\end{align*}
Using these simpler bounds and substituting into \eqref{eqn:last-a.s.}, with probability at least $$1 - 2 \exp\brk[c]3{-C \min \brk[s]2{\log(np),  \sqrt{\min(n, p) \log(np)}}} - \frac{8}{np}$$ and large enough $\gamma'$:
\begin{equation*}
    \begin{split}
        & \frac{1}{2np} \|\bU - \hat \bU\|_{\hat \bw^2 + \lambda \hat \bw}^2 + \gamma' \sum_{(i, j) \in \mathcal E^R} \|\bZ^R_{\mathcal R(i, j)} \hat \bbeta^R\|_2 + \gamma' \sum_{(k, \ell) \in \mathcal E^C} \|\bZ^C_{\mathcal C(k, \ell)} \hat \bbeta^C\|_2 \\
        & \leq \frac{\sigma^2 (1+\lambda)}{2} \brk[s]3{\frac{1}{n} + \frac{1}{p} + \sqrt{\frac{\log(np)}{n p^2}} + \sqrt{\frac{\log(np)}{n^2 p}}} + \gamma' \sum_{(i, j) \in \mathcal E^R} \norm2{\bZ^R_{\mathcal R(i, j)} (\hat \bbeta^R - \bbeta^R)}_2 \\
        & \quad + \gamma' \sum_{(k, \ell) \in \mathcal E^C} \norm2{\bZ^C_{\mathcal C(k, \ell)} (\hat \bbeta^C - \bbeta^C)}_2 + \gamma' \sum_{(i, j) \in \mathcal E^M} \|\bZ^M_{\mathcal M(i, j)} \bbeta^M\|_2.
    \end{split}
\end{equation*}
Now subtract the fusion terms from the LHS and use triangle inequality to obtain the desired bound:
\begin{equation*}
    \begin{split}
        \frac{1}{2np} \|\bU - \hat \bU\|_{\hat \bw^2 + \lambda \hat \bw}^2 & \leq \frac{\sigma^2 (1+\lambda)}{2} \brk[s]3{\frac{1}{n} + \frac{1}{p} + \sqrt{\frac{\log(np)}{n p^2}} + \sqrt{\frac{\log(np)}{n^2 p}}} \\
        & \quad + 2\gamma' \sum_{i < j} \sqrt{\Phi_{ij}} \|\bU_{i \cdot} - \bU_{j \cdot}\|_2 \\
        & \quad + 2\gamma' \sum_{k < \ell} \sqrt{\Psi_{k\ell}} \|\bU_{\cdot k} - \bU_{\cdot \ell}\|_2.
    \end{split}
\end{equation*}

\end{proof}

\section{Cluster assignments}\label{appendix:cluster-assignments}
 We recommend assigning two rows to the same cluster if the weighted distance between them is smaller than a threshold value $r_1$, e.g.
\begin{equation*}
    \bU_{i\cdot}^\star(r_1) = \bU_{j\cdot}^\star(r_1) \iff \|\hat \bU_{i\cdot} - \hat \bU_{j \cdot}\|_{\hat \bw^2 + \lambda \hat \bw} \leq r_1,
\end{equation*}
where $\hat \bU$ and $\hat \bw$ are local minima of \eqref{eqn:appendix:objective}, and $\bU^\star$ is the least squares estimate for $\bU$ with biclusters determined by threshold radius $r_1$. This is a similar approach to \citet{chiConvexBiclustering2017}, which recommends setting $r_1$ to be a percentage of the standard deviation of all pairwise weighted row distances. This is equivalent to building a network where each node represents a row, and edges are present if two nodes fall within the calculated threshold. Subsequently, the connected components of the network will determine cluster membership. An analogous method is used to identify the column clusters, while assigning all columns of zero weight to an isolated bicluster, e.g. 
\begin{equation*}
    \bU_{\cdot k}^\star(r_2) = \bU_{\cdot \ell}^\star(r_2) \iff \|\hat \bU_{\cdot k} - \hat \bU_{\cdot \ell}\|_{2} \leq r_2 \text{ and } \hat \bw_k, \hat \bw_\ell > 0.
\end{equation*}
After calculating all row and column cluster memberships, the least square estimate with these assignments can be calculated
\begin{equation}
\begin{split}
    \bU^\star_{ik}(r_1, r_2) & = \operatorname{mean}(\{X_{j\ell} \colon \bU^\star_{i\cdot}(r_1) = \bU^\star_{j\cdot}(r_1), \bU^\star_{\cdot k}(r_2) = \bU^\star_{\cdot \ell}(r_2)\}), \quad (\bw_k > 0); \\
    \bU^\star_{ik}(r_1, r_2) & = \operatorname{mean}(\{X_{j\ell} \colon \hat \bw_\ell = 0\}), \quad (\bw_k = 0). 
\end{split}
\end{equation}
